\newcommand\numberthis{\addtocounter{equation}{1}\tag{\theequation}}
\newcommand{\R}{\mathbb{R}}
\newcommand{\set}[1]{\{ #1 \}}
\DeclarePairedDelimiterX{\dprod}[2]{\langle}{\rangle}{#1, #2}
\DeclareMathOperator*{\argmin}{\arg\! \min}
\DeclarePairedDelimiter{\floor}{\lfloor}{\rfloor}
\title{Characterization of point-source transient events with a rolling-shutter compressed sensing system}
\author[1]{Frank Qiu}
\author[1]{Joshua Michalenko}
\author[1]{Lilian K. Casias}
\author[1]{Cameron J. Radosevich}
\author[1]{Jon Slater}
\author[1]{Eric A. Shields}
\affil[1]{Sandia National Laboratories, Albuquerque, New Mexico 87123, USA}
\begin{document} 
\maketitle

\begin{abstract}
Point-source transient events (PSTEs) - optical events that are both extremely fast and extremely small - pose several challenges to an imaging system. Due to their speed, accurately characterizing such events often requires detectors with very high frame rates. Due to their size, accurately detecting such events requires maintaining coverage over an extended field-of-view, often through the use of imaging focal plane arrays (FPA) with a global shutter readout. Traditional imaging systems that meet these requirements are costly in terms of price, size, weight, power consumption, and data bandwidth, and there is a need for cheaper solutions with adequate temporal and spatial coverage. To address these issues, we develop a novel compressed sensing algorithm adapted to the rolling shutter readout of an imaging system. This approach enables reconstruction of a PSTE signature at the sampling rate of the rolling shutter, offering a 1-2 order of magnitude temporal speedup and a proportional reduction in data bandwidth. We present empirical results demonstrating accurate recovery of PSTEs using measurements that are spatially undersampled by a factor of 25, and our simulations show that, relative to other compressed sensing algorithms, our algorithm is both faster and yields higher quality reconstructions. We also present theoretical results characterizing our algorithm and corroborating simulations. The potential impact of our work includes the development of much faster, cheaper sensor solutions for PSTE detection and characterization.
\end{abstract}

\keywords{Computational Imaging, Rolling-Shutter, Compressed Sensing}

\section{Introduction}
The detection and characterization of point-source transient-events  (PSTEs), extremely fast and small optical events, present several challenges to the data-processing pipeline of a traditional imaging system. Firstly, their speed necessitates a high temporal sampling rate for accurate characterization. Secondly, their limited spatial size often requires maintaining coverage over an extended field-of-view for reliable detection, necessitating the use of imaging focal plane arrays (FPA) with a global shutter readout. Current state-of-the-art commercial cameras with sufficient speed and spatial coverage are expensive, power hungry, heavy, unable to operate continuously, and generate high-bandwidth  data transmissions. These characteristics make such cameras difficult to operate and limit their practical effectiveness in a variety of settings.

To address these challenges, we leverage modern advancements in computational imagery and optical imaging techniques to enhance the capabilities of a traditional imaging system. On the computational side, the field of compressed sensing (CS) \cite{CandesWakinIntro} provides robust techniques for the recovery of sparse signals from far fewer samples than traditional Shannon-Nyquist sampling theory would dictate. Combined with recent advances in optical sampling techniques, CS algorithms have demonstrated the ability to create video products with frame rates orders of magnitude higher than what a sensor can natively support \cite{VidFromStills, DiffuserCamAntipa, Weinberg100k, BoominathanRecentAdv, WillettDiff}. This also simultaneously reduces the data bandwidth by the same factor: in our simulations, we used a $\SI{40}{\hertz}$ global shutter rate camera to image at $\SI{1000}{\hertz}$, offering a data reduction factor of 25 relative to a camera operating natively at $\SI{1000}{\hertz}$.

In  this paper, we present work on algorithms adapted to the rolling shutter readout of an imaging system\cite{VidFromStills,DiffuserCamAntipa, Weinberg100k}. Section \ref{sec:OpticalSystem} provides an overview of our imaging system, and Section \ref{sec:AlgoOverview} introduces the algorithms used to recover the original signal. Section \ref{sec:Theory} provides a theoretical characterization of both our imaging system and algorithms, giving conditions under which successful signal recovery can happen. Section \ref{sec:Experiments} contains simulation results that compare our algorithm to other benchmark CS algorithms, validate our theoretical results, and demonstrate the limits of signal recovery under our signal processing pipeline. Section \ref{sec:PractConsid} addresses challenges to signal recovery using our current imaging system and discusses two potential hardware solutions.

\section{Imaging System Overview} \label{sec:OpticalSystem}
A traditional imaging system is typically comprised of three main components:  1) the optical elements used to form a focused image at the focal plane, 2) an array of photo-sensitive pixels called the focal plane array (FPA), 3) the readout electronics used to measure and transfer the signal from the FPA. The optics of such systems are optimized for imaging performance, so that light originating from a point-source object is localized to a small number of adjacent pixels on the FPA. A rolling shutter readout is often used, where each line of the FPA is exposed, sampled, and read-out sequentially. As such, pixels corresponding to a point-source event are only sampled on the order of the global frame rate - the rate of exposing, sampling, and reading out all lines of the FPA.  In contrast, the rolling shutter rate - the rate of recording a new line - is orders of magnitude faster,  and several works leverage the rolling shutter readout to increase the sampling rate\cite{VidFromStills,DiffuserCamAntipa, Weinberg100k}. To combat spatial under-sampling inherent to the rolling shutter readout, these approaches introduce a diffuser-like element within the optical train to spread light over the entire FPA, and the original signal is recovered by applying a reconstruction algorithm to the diffused signal's rolling shutter readout.


Our simulated imaging system in Sections \ref{sec:Experiments} and \ref{sec:PractConsid} takes this approach. The input signal is passed through a phase diffuser, and a rolling shutter sampler is applied to the diffused signal to generate measurements. Figure \ref{fig:OpticalSystem} visualizes the diffuser's point spread function (PSF) as well as the rolling shutter schedule, and Figure \ref{fig:MeasurementSeq} visualizes the various stages of signal processing in a rolling shutter imaging system. While the shutter reads a single line at the rolling shutter rate, we can choose to sample at slower frequencies by integrating over an appropriate time window. For example, if the rolling shutter rate is $\SI{5000}{\hertz}$ and we wish to sample at $\SI{1000}{\hertz}$, we would integrate the readout over a $\SI{1}{\milli\second}$ window and hence sample five lines at a time. Accordingly, the rolling shutter schedule of Figure \ref{fig:OpticalSystem} shows a rolling shutter that samples five lines at a time. The effect of choosing an appropriate integration window (and hence number of lines per sample) is discussed theoretically in Section \ref{sec:Theory} and practically in Section \ref{sec:Experiments}.

\begin{figure} [hb]
    \begin{center}
    \begin{tabular}{c} 
    \includegraphics[width = 16cm]{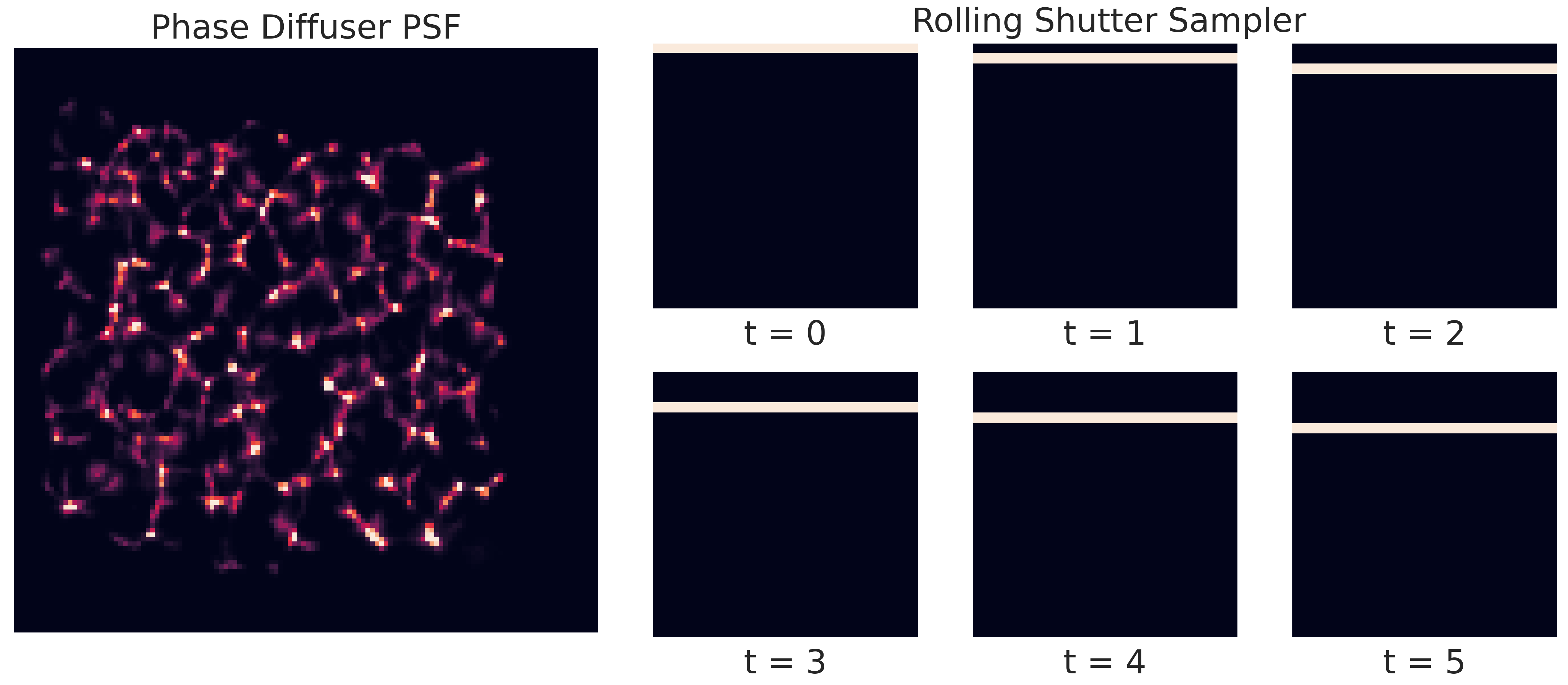}
    \end{tabular}
    \end{center}
    \caption{\label{fig:OpticalSystem}
    Overview of the simulated rolling shutter system in Sections \ref{sec:Experiments} and \ref{sec:PractConsid}. On the left is the diffuser's PSF, recorded from a physical phase diffuser \cite{VidFromStills}. On the right, we visualize the sampling schedule of the rolling shutter over time, with the white lines representing the sampled lines. Note that the diffuser's PSF spreads a signal over most of the FPA, which combats the spatial undersampling inherent to the rolling shutter readout.}
\end{figure} 

\begin{figure} [ht]
    \begin{center}
    \begin{tabular}{c} 
    \includegraphics[width = 17cm]{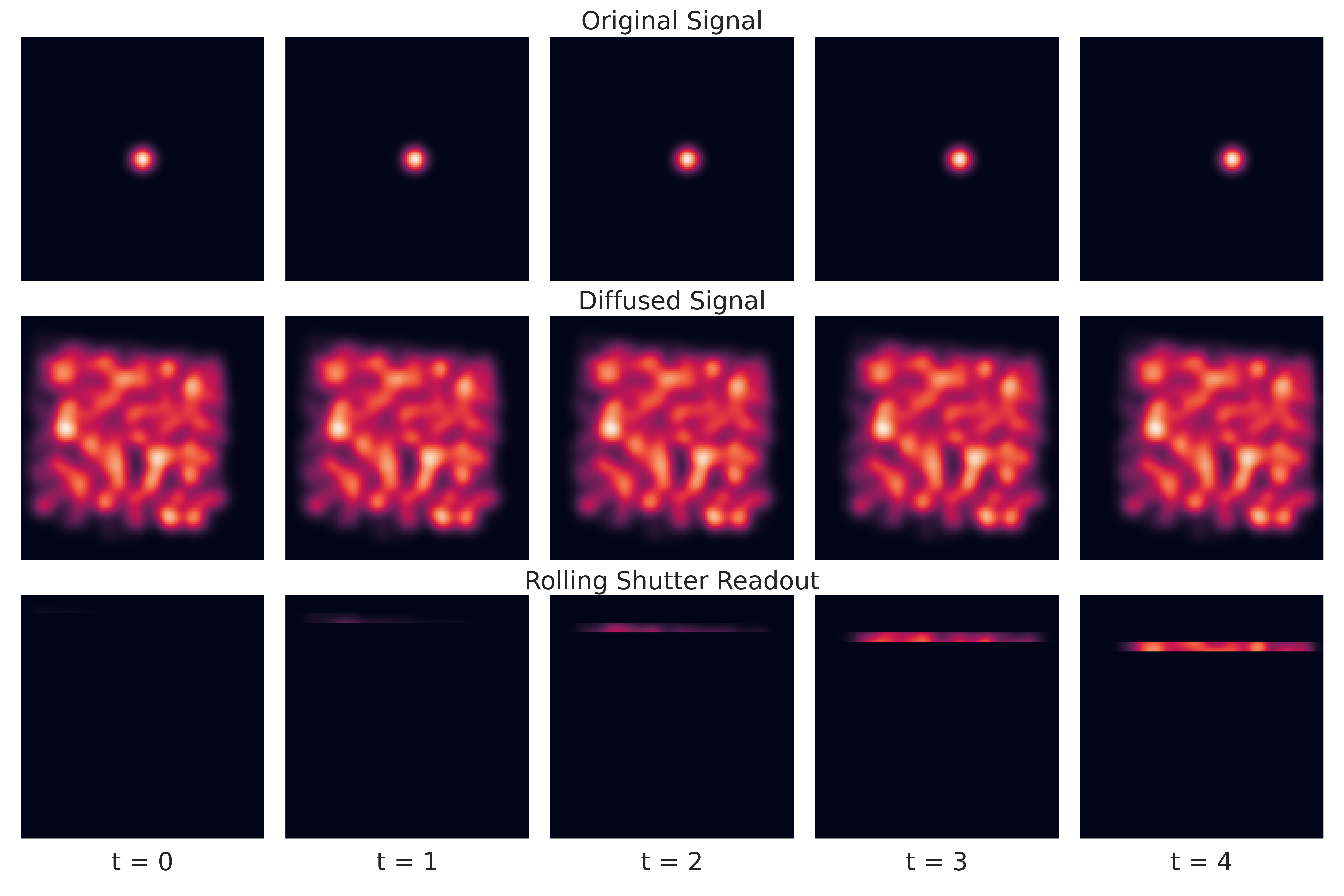}
    \end{tabular}
    \end{center}
    \caption{\label{fig:MeasurementSeq}
    Visualization of the various stages of processing in our rolling shutter imaging system. The top row shows the raw input signal of a light source moving horizontally. The middle row shows the result of passing the input through the phase diffuser. The bottom row shows measurements recorded from the rolling shutter readout. From these measurements, we recover the original signal of the top row by applying our reconstruction algorithms in Section \ref{sec:AlgoOverview}.}
\end{figure}

\section{Algorithms Overview}\label{sec:AlgoOverview}
At each time point, our rolling shutter imaging system samples $M$ of the $P$ total pixels in the FPA. Accordingly, our measurement model at time $t$ can be written as:
\begin{equation}\label{eqn:MeasurementModel}
    y^{(t)} = A^{(t)} x^{(t)} + z^{(t)}
\end{equation}
where $y^{(t)} \in \R^M$ is our measurement, $A^{(t)} \in \R^{M \times P}$ is a sensing matrix representing our imaging system, $x^{(t)} \in \R^{P}$ is the input to the system, and $z^{(t)} \in \R^M$ is any exogenous noise. The sensing matrix $A^{(t)}$ can be further decomposed into:
\begin{gather}
    A^{(t)} = P_r^{(t)} H \label{eqn:SensingMatrixDecomp}\\
    H v \equiv  \zeta \ast v \label{eqn:CircmatrixDecomp}
\end{gather}
where $P_r^{(t)}$ is a projection matrix representing the rolling shutter at time $t$ and $H$ is a circulant matrix representing convolution with the PSF $\zeta$. Note that the sensing matrix $A^{(t)}$ is time-dependent, since the rolling shutter will sample different line subsets as it sweeps over the FPA.

\subsection{Algorithm \ref{alg:FISTA_Diff}: FISTA with Differences}
Let $x^* \in \R^{P \times T}$ denote the input movie of $T$ images and $y \in \R^{M \times T}$ the corresponding sequence of measurements. Our goal is recover $x^*$ from only the measurements $y$. To do this, we adopt a compressed sensing approach and reconstruct by minimizing the following objective:
\begin{equation}\label{eqn:DiffObj_2}
    \hat{x} = \argmin_x \sum_{t = 0}^{T-1} \frac{1}{2}\| A^{(t)} x^{(t)} - y^{(t)}\|^2_2 + \lambda \| x^{(t)} - x^{(t -1)} \|_1
\end{equation}
where we set $x^{(-1)} \equiv 0$ and $\lambda > 0$ is a regularization parameter. In contrast to a standard compressed sensing objective, we penalize the $\ell^1$ norm of the frame-by-frame differences rather than the frames themselves, an approach previously used in another coded aperture system\cite{WillettDiff}. Defining $d^{(t)} \equiv x^{(t)} - x^{(t-1)}$, there is a bijection between a movie $x \in \R^{P \times T}$ and its corresponding sequence of differences $d \in \R^{P \times T}$:
\[
x^{(t)} = \sum_{s \leq t} d^{(s)}
\] 
Hence, we can instead recover the differences of $x^*$ by rewriting equation \ref{eqn:DiffObj_2}:
\begin{equation}\label{eqn:DiffObj}
    \hat{d} = \argmin_d \sum_{t = 0}^{T-1} \frac{1}{2}\| A^{(t)} (\sum_{s \leq t} d^{(s)}) - y^{(t)}\|_2^2 + \lambda \| d^{(t)} \|_1
\end{equation}
The solutions of equations \ref{eqn:DiffObj_2} and \ref{eqn:DiffObj} are equivalent. 
However, working with differences yields simpler optimization equations, and so we use the differences formulation of equation \ref{eqn:DiffObj}.

Since our optimization problem is a $\ell^2 - \ell^1$ minimization problem, we use the Fast Iterative Shrinkage-Thresholding Algorithm\cite{FISTA} (FISTA) to solve equation \ref{eqn:DiffObj}. Let $\eta \in \R^{M \times T}$ denote the error between the predicted measurement sequence under $\hat{x}$ and the observed measurements $y$, with $\eta^{(t)} = A^{(t)}[\sum\limits_{s \leq t} d^{(s)}-x^{*(t)} ]$.  The FISTA update rule at iterate $m$ can be written as:
\begin{equation}\label{eqn:FISTA_update}
d_{m+1} = soft_\lambda(d_m - \tau H^T S^T \eta_m)
\end{equation}
where $\tau$ is the stepsize, $soft_\lambda$ is the soft-thresholding operator with parameter $\lambda$, and $S$ is an lower triangular matrix of 1's. Appendix \ref{app:OptDetails} contains a derivation of the FISTA update rule and a formula for calibrating the stepsize $\tau$.

\subsection{Algorithm \ref{alg:BlockedFISTA-D}: Blocked FISTA with Differences}
While we can reconstruct the entire movie at once, the sequential nature of movies suggests a divide-and-conquer approach. In the blocked version of the algorithm, we first partition the measurement sequence into blocks of equal length $B$. Having generated difference estimates for the $k^{th}$ block $\hat{d}_k$, the reconstruction of the final frame of block $k$ is the sum of all differences:
\[
\hat{x}^{(B-1)}_k = \sum_{s = 0}^{B-1} \hat{d}^{(s)}_k
\]
We then initialize the first frame of block $k + 1$ with this estimate:
\[
\hat{d}^{(0)}_{k+1} = \hat{x}^{(0)}_{k+1} \leftarrow \hat{x}^{(B-1)}_k
\]
We found that the blocked version is both faster and more accurate than the unblocked version of the previous section. From equation \ref{eqn:FISTA_update}, the time complexity of each optimization step scales with the size of the input sequence, so optimizing over a smaller block is faster than optimizing over the whole movie. As PSTEs are temporally transient, the algorithm converges extremely quickly on the many non-active blocks and concentrates computation on the few active blocks, resulting in an overall speedup. Similarly, the theoretical results of Section \ref{sec:ErrorBounds} suggest that reconstruction error grows with the length of the input, as errors can build up from summing the difference estimates. Blocking allows us to control this error buildup and maintain high quality reconstructions over movies of arbitrary length. Figure \ref{fig:algoDescription} contains pseudocode for both algorithm variants.

\begin{figure}
\begin{minipage}{0.46\textwidth}
\vspace{0pt}
\begin{algorithm}[H]
    \centering
    \caption{FISTA with Differences (FISTA-D)}\label{alg:FISTA_Diff}
    \begin{algorithmic}[1]
        \State \textbf{Input:} 
            \State \hspace{1em} measurements: $y \in \R^{M \times T}$ 
            \State \hspace{1em} initialization: $\hat{d} \in \R^{P \times T}$
        \State \textbf{Hyperparameters:} 
            \State \hspace{1em} stepsize: $\tau > 0$
            \State \hspace{1em} regularization parameter: $\lambda > 0$
        \State
        \While{not converged} 
        \For{$t \in {0,\cdots, T-1}$}
        \State $\eta^{(t)} \gets A^{(t)}[\sum\limits_{s \leq t} d^{(s)} - x^{*(t)} ]$
        \EndFor
        \State $\hat{d} \gets soft_\lambda(\hat{d} - \tau H^T S^T \eta)$
        \EndWhile
        \State
        \State \textbf{return} $\hat{d}$
    \end{algorithmic}
\end{algorithm}
\end{minipage}
\hfill
\begin{minipage}{0.46\textwidth}
\vspace{0pt}
\begin{algorithm}[H]
    \centering
    \caption{Blocked FISTA with Differences}\label{alg:BlockedFISTA-D}
    \begin{algorithmic}[1]
        \State \textbf{Input:} 
            \State \hspace{1em} measurements $y \in \R^{M \times T}$ 
        \State \textbf{Hyperparameters:} 
            \State \hspace{1em} stepsize: $\tau > 0$
            \State \hspace{1em} regularization parameter: $\lambda > 0$
            \State \hspace{1em} block length: $B \in \mathbb{Z}_+$
        \State
        \State \textbf{initialize:} $\hat{x}_{-1} \gets 0 \in \R^{P}$
        \For{$b \in \set{0,\cdots,  \floor{\frac{T}{B}}}$}
        \State $y_b \gets \textbf{cat}(y^{(bB)}, y^{(bB+1)},\cdots, y^{((b+1)B -1)})$
        \State $\hat{d}_b \gets \boldsymbol{0}$
        \State $\hat{d}_b^{\tiny (0)} \gets \hat{x}_{b-1}$
        \State $\hat{d}_b \gets$ FISTA-D($y_b, \hat{d}_b, \tau, \lambda$)
        \State $\hat{x}_{b} \gets \sum\limits_{s=0}^{B-1} \hat{d}_b^{(s)}$
        \EndFor
        \State $\hat{d} \gets$ \textbf{cat}($\hat{d_0},\cdots,\hat{d}_{ \floor{\frac{T}{B}}}$)
        \State
        \State \textbf{return} $\hat{d}$
    \end{algorithmic}
\end{algorithm}
\end{minipage}    
\vspace{1em}
\caption{\label{fig:algoDescription} Pseudocode for both unblocked and blocked versions of the differences algorithm. Both algorithms output the sequence of differences estimates $\hat{d}$. Summing the difference estimates generates the movie reconstruction $\hat{x}$.}
\label{fig:my_label}
\end{figure}

\section{Theoretical Characterization}\label{sec:Theory}
In this section, we give an overview of the relevant theoretical aspects of our imaging system and algorithm. CS algorithms exploit the fact that real world signals are often extremely compressible, allowing reconstruction of the original signal from drastically undersampled measurements. Such compressible signals $x$ are (near) sparse in some basis $\Phi$, meaning they can be (approximately) expressed as $x = \Phi \Tilde{x}$ with $\Tilde{x}$ having only a few non-zero entries. In order to guarantee accurate reconstruction, these algorithms generally require that the sensing matrix satisfy the Restricted Isometry Property\cite{CandesRIP} (RIP), meaning that it preserves distances between sparse vectors. More precisely, we say that a matrix $A$ is RIP of order $k$ if there exists some $\delta_k > 0$ such that:
\[
(1-\delta_k) \| x \|_2^2 \leq \| A x \|_2^2 \leq (1 + \delta_k) \| x \|_2^2
\]
for all $x$ with at most $k$ non-zero entries - such vectors are also called $k$-sparse. Given a vector $x \in \R^{P}$, let $[x]_k$ denote its best $k$-sparse approximation which drops all but the $k$ largest magnitude entries of $x$. The near $k$-sparsity of $x$ is then denoted as $\sigma_k(x) \equiv \| x - [x]_k \|_1$. By slight abuse of notation, for a movie $x \in \R^{P \times T}$ let $\sigma_k(x) \equiv \sum_{t=0}^{T-1} \sigma_k(x^{(t)})$ denote the sum of each frame's near $k$-sparsity.

\subsection{Rolling Shutter and the Restricted Isometry Property} \label{sec:RollingShutterRIP}
We provide conditions and sparsity levels under which the sensing matrices of our rolling shutter system are guaranteed to be RIP. From equations \ref{eqn:SensingMatrixDecomp} and \ref{eqn:CircmatrixDecomp}, our sensing matrix $A^{(t)}$ at time $t$ can be written as the composition of a projection and convolution:
\[
A^{(t)} x = P_r^{(t)} \zeta \ast x
\]
Provided the PSF satisfies certain assumptions, we may apply the following theorem to our sensing matrices.

\begin{theorem}[Theorem 4.1 \cite{krahmer2013suprema}] \label{thm:RIPpsf}
Let $\zeta \in \R^n$ be a random vector with independent, mean-zero, variance one, $S$-subgaussian entries. Let $A \in \R^{m \times n}$ be the matrix defined as $Ax \equiv \frac{1}{\sqrt{m}} P_m [\zeta \ast x]$ for some projection matrix $P_m: \R^n \rightarrow \R^m$. If, for $k \leq n$ and $\eta,\delta \in (0,1)$
\begin{equation*}
    m \geq c_S \delta^{-2} k \max\set{\log^2(k) \log^2(n), \log(\eta^{-1})}
\end{equation*}
then with probability at least $1 - \eta$, the restricted isometry constant of the matrix $A$ satisfies $\delta_k \leq \delta$. The constant $c_S$ depends only on the subgaussian parameter $S$.
\end{theorem}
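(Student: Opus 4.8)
The plan is to recognize the restricted isometry constant $\delta_k$ as the supremum of an order-two chaos process in the subgaussian vector $\zeta$, and then invoke the general chaos-process machinery of \cite{krahmer2013suprema}. First I would rewrite the convolution as $\zeta \ast x = C_x\zeta$, where $C_x \in \R^{n\times n}$ is the circulant matrix whose columns are the cyclic shifts of $x$, so that $Ax = V_x\zeta$ with $V_x \equiv \tfrac{1}{\sqrt{m}}\,P_m C_x$. Since $\zeta$ has identity covariance, $\mathbb{E}\|Ax\|_2^2 = \tfrac{1}{m}\|P_m C_x\|_F^2 = \|x\|_2^2$ (each row of $C_x$ is a permutation of the entries of $x$, and $P_m$ selects $m$ of them), so that
\[
\delta_k \;=\; \sup_{x \in D_{k,n}} \bigl|\, \|V_x\zeta\|_2^2 - \mathbb{E}\|V_x\zeta\|_2^2 \,\bigr|,
\]
where $D_{k,n}$ denotes the set of $k$-sparse unit vectors in $\R^n$. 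This is precisely the supremum of a chaos process indexed by the matrix set $\mathcal{V} \equiv \set{V_x : x \in D_{k,n}}$.

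Next I would apply the tail bound for suprema of chaos processes from \cite{krahmer2013suprema}, which controls the displayed quantity in terms of the Frobenius radius $d_F(\mathcal{V})$, the operator-norm radius $d_{2\to 2}(\mathcal{V})$, and Talagrand's $\gamma_2$ functional $\gamma_2(\mathcal{V}, \|\cdot\|_{2\to 2})$. The two radii are immediate: $d_F(\mathcal{V}) = \sup_x\|x\|_2 = 1$; and since $C_x$ is circulant its operator norm equals $\max_j|\hat{x}_j| \le \|x\|_1 \le \sqrt{k}\,\|x\|_2$, so $d_{2\to 2}(\mathcal{V}) \le \sqrt{k/m}$ (restricting to $m$ rows cannot increase the operator norm). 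Substituting these, the bound on $\mathbb{E}\delta_k$ is dominated by $\gamma_2(\mathcal{V},\|\cdot\|_{2\to2})^2$, so it falls below $\delta$ once $m \gtrsim \delta^{-2}\gamma_2(\mathcal{V},\|\cdot\|_{2\to2})^2$; and the deviation term drives the failure probability below $\eta$ once additionally $m \gtrsim \delta^{-2} k \log(\eta^{-1})$, which is the source of the second branch of the stated maximum. The dependence of the final constant on the subgaussian parameter $S$ enters only through this chaos tail estimate.

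The heart of the proof — and the step I expect to be the main obstacle — is estimating $\gamma_2(\mathcal{V}, \|\cdot\|_{2\to2})$. Here I would bound it by the Dudley entropy integral $\int_0^{d_{2\to2}(\mathcal{V})}\sqrt{\log N(\mathcal{V}, \|\cdot\|_{2\to2}, u)}\,du$ and estimate the covering numbers in two regimes. For small $u$, a volumetric bound — union over the $\binom{n}{k}$ supports, with a $u$-net of the Euclidean ball in each $k$-dimensional coordinate subspace — gives $\log N \lesssim k\log(n/k) + k\log(1/u)$. For large $u$ this bound is useless, and one instead uses a dimension-free empirical (Maurey-type) argument: approximate a $k$-sparse unit vector by an average of a few signed standard basis vectors, and control the operator norm of the corresponding average of cyclic shifts by a matrix Khintchine/Bernstein inequality, yielding $\log N \lesssim u^{-2}\log^2(n)\log(k)$. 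Splitting the integral at the crossover scale and combining the two estimates produces $\gamma_2(\mathcal{V},\|\cdot\|_{2\to2}) \lesssim \sqrt{k/m}\,\log(k)\log(n)$, and substituting this back into the chaos bound yields the claimed sampling condition $m \gtrsim \delta^{-2}k\max\set{\log^2(k)\log^2(n),\log(\eta^{-1})}$. The delicate points are arranging the Maurey step so that it bounds the operator norm (rather than a weaker Frobenius-type quantity) of the shift averages, and keeping the polylogarithmic factors sharp enough to match the claimed exponents.
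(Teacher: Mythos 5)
You should note first that the paper does not prove this statement at all: it is imported verbatim as Theorem 4.1 of Krahmer, Mendelson and Rauhut \cite{krahmer2013suprema} and used as a black box to argue that the rolling-shutter sensing matrices are RIP, so the only meaningful comparison is with the original source. Measured against that source, your outline is essentially the same proof: writing $\delta_k$ as the supremum of an order-two chaos process over $\mathcal{V}=\set{\frac{1}{\sqrt{m}}P_m C_x : x \ k\text{-sparse},\ \norm{x}_2=1}$, invoking their chaos-suprema tail bound with $d_F(\mathcal{V})=1$ and $d_{2\to2}(\mathcal{V})\le\sqrt{k/m}$ via the DFT bound $\norm{C_x}_{2\to2}\le\norm{x}_1\le\sqrt{k}$, and estimating $\gamma_2(\mathcal{V},\norm{\cdot}_{2\to2})\lesssim\sqrt{k/m}\,\log k\log n$ by a Dudley integral split into a volumetric covering bound at small scales and a Maurey empirical-method bound at large scales is exactly their route, and it does yield the stated condition $m\gtrsim \delta^{-2}k\max\set{\log^2 k\log^2 n,\log(\eta^{-1})}$. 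Two small remarks: in the original argument the Maurey step does not need a matrix Khintchine inequality, since the operator norm of a (partial) circulant average is controlled by the maximum DFT modulus, so a scalar Khintchine bound plus a union bound over the $n$ frequencies suffices (this is where the extra $\log n$ enters); and your identity $\mathbb{E}\norm{Ax}_2^2=\norm{x}_2^2$ uses that $P_m$ selects $m$ coordinates (rows), which is what the paper means by a rolling-shutter projection, rather than an arbitrary orthogonal projection.
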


Taking a union bound over all unique sensing matrices $A^{(t)}$ (as the rolling shutter is periodic), Theorem \ref{thm:RIPpsf} establishes a linear relationship between the maximum support size $k$ of each frame and the number of lines sampled, up to a squared log factor. PSTEs are spatially limited with extremely small $k$, so we can sample a few lines at a time while still guaranteeing that our sensing matrices are uniformly RIP with constant $\delta_k$. The result requires that our PSF satisfy certain randomness conditions and have non-negative values, the latter of which is impossible to realize in a physical system. However, empirically we found that this result provides good predictions (see Sections \ref{sec:Experiments} and \ref{sec:PractConsid}), and mild violations of the theorem's assumptions do not affect its predictive power. From the discussion in Section \ref{sec:OpticalSystem}, the number of lines per sample is a function of the system's native rolling shutter rate and our choice of a time integration window. In light of this, Theorem \ref{thm:RIPpsf} can also be interpreted as requiring the size of the signal's spatial support to scale roughly linearly with the rolling shutter rate.

\subsection{Reconstruction Error Bounds} \label{sec:ErrorBounds}
A typical error bound in compressed sensing controls the reconstruction error by the near-sparsity of the underlying signal \cite{CandesTao}. However, some CS algorithms alternatively use a total variation (TV) regularizer, and in certain settings one can establish error bounds that scale with the near-sparsity of the total variation gradient \cite{KrahmerTvnorm, NeedellWard, NeedellWard2, PoonFourier}. Analogously, we establish a result bounding the average frame-wise reconstruction error by the near sparsity of the signal's time gradient plus a boundary condition. Sampled at sufficiently high rates, the frame-by-frame change in a signal might be very sparse even if the frames themselves are not sparse. This theoretical result complements the primary hardware advantage of our rolling shutter system, which drastically speeds up the sampling rate relative to a global shutter system.

While our algorithm recovers the difference estimates by solving equation \ref{eqn:DiffObj}, this is equivalent to directly solving for the reconstruction estimates via equation \ref{eqn:DiffObj_2}. Given a movie $x \in \R^{P \times T}$, let $\nabla_t x \in \R^{P \times (T-1)}$ denote its vector of forward time differences with $\nabla_t x^{(t)} = x^{(t+1)} - x^{(t)}$. Then, the objective of equation \ref{eqn:DiffObj_2} is the Lagrangian\cite{BoydOpt} of the following constrained optimization problem:
\begin{alignat*}{2}
    &\text{\normalfont minimize}  &  \| x^{(0)} \|_1 + ||\nabla_t x||_1\\
    &\text{\normalfont subject to} \quad & \quad \forall t, \ ||A^{(t)} x^{(t)}-  y^{(t)}||_2 \leq  \epsilon
\end{alignat*}
where there is a one-to-one relationship between $\epsilon$ and the regularization parameter $\lambda$. Our reconstruction $\hat{x}$ is the solution to the above optimization problem, and we have the following result.

\begin{theorem}\label{thm:avgErrDiff}
Let $x^* \in \R^{P \times T}$ be a movie of length $T$ and $\nabla_t x^* \in \R^{P \times (T-1)}$ denote its corresponding sequence of forward time differences. Given sensing matrices $A^{(t)}$ that are uniformly RIP with $\delta_{2k} < (4T)^{-1}$, suppose we have the measurements $y^{(t)} = A^{(t)}x^{*(t)} + z^{(t)}$ with $\norm{z^{(t)}}_2 \leq \epsilon$ for all times $t$. Let $\hat{x}$ be the solution to the following optimization problem:
\begin{alignat*}{2}
    &\text{\normalfont minimize}  &  \| x^{(0)} \|_1 + ||\nabla_t x||_1\\
    &\text{\normalfont subject to} \quad & \quad \forall t, \ ||A^{(t)} x^{(t)}-  y^{(t)}||_2 \leq  \epsilon
\end{alignat*}
For any pixel index set $I$ with $| I | \leq k$, let $\Bar{I} = \cup_t \Bar{I}(t)$ be the index set generated by repeating $I$ at each frame. For any such $\bar{I}$, we have:
\begin{equation}\label{bnd:SupportReconError}
\frac{1}{T} \sum_{t=0}^{T-1} \|x^{*(t)} - \hat{x}^{(t)}\|_2 \leq C \frac{\| x^{*(0)}_{I^c} \|_1 + \| \nabla_t x^*_{\bar{I}^c}\|_1}{\sqrt{k}} + C' T\epsilon
\end{equation}
where the constants $C, C'$ only depend on $\delta_{2k}$. If the $k$-sparse approximations of $\nabla_t x^*$ and $x^{(0)}$ share the same support, we have:
\begin{equation}\label{bnd:ReconError}
\frac{1}{T} \sum_{t=0}^{T-1} \|x^{*(t)} - \hat{x}^{(t)}\|_2 \leq C \frac{  \sigma_k(x^{*(0)}) + \sigma_k(\nabla_t x^*)}{\sqrt{k}} + C' T\epsilon
\end{equation}
\end{theorem}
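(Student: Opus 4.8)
The plan is to run a frame-by-frame restricted-isometry argument and then close it with a self-bounding (bootstrapping) step that is made available precisely by the hypothesis $\delta_{2k}<(4T)^{-1}$. First I would fix an index set $I$ with $|I|\le k$, pass to the difference domain, and record two basic facts. Write $e^{(t)}=\hat d^{(t)}-d^{*(t)}$, where $d^{(0)}=x^{(0)}$ and $d^{(t)}=x^{(t)}-x^{(t-1)}$, so that the per-frame error is $h^{(t)}:=\hat x^{(t)}-x^{*(t)}=\sum_{s\le t}e^{(s)}$ and, in particular, $h^{(t)}_I$ is automatically $k$-sparse. (i) Residual bound: since $\norm{z^{(t)}}_2\le\epsilon$ the true movie $x^*$ is feasible, so both $\hat x$ and $x^*$ obey the constraint and $\norm{A^{(t)}h^{(t)}}_2\le 2\epsilon$ for every $t$. (ii) Cone condition: optimality of $\hat x$ gives $\sum_t\norm{\hat d^{(t)}}_1\le\sum_t\norm{d^{*(t)}}_1$; splitting each $d^{(t)}$ on $I$ and $I^c$ and using the forward and reverse triangle inequalities yields $\sum_t\norm{e^{(t)}_{I^c}}_1\le\sum_t\norm{e^{(t)}_{I}}_1+2\Sigma$, where $\Sigma:=\sum_t\norm{d^{*(t)}_{I^c}}_1=\norm{x^{*(0)}_{I^c}}_1+\norm{\nabla_t x^*_{\bar I^c}}_1$ is exactly the numerator appearing in \eqref{bnd:SupportReconError}.

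Next I would run the standard RIP block decomposition on each frame separately. For fixed $t$, sort the coordinates of $h^{(t)}_{I^c}$ into size-$k$ blocks $T_1^{(t)},T_2^{(t)},\dots$ by decreasing magnitude and put $T_{01}^{(t)}=I\cup T_1^{(t)}$. Expanding $\norm{A^{(t)}h^{(t)}_{T_{01}^{(t)}}}_2^2$ through $\langle A^{(t)}h^{(t)}_{T_{01}^{(t)}},A^{(t)}h^{(t)}\rangle$, then using (i), the near-orthogonality estimates $|\langle A^{(t)}h^{(t)}_{T_{01}^{(t)}},A^{(t)}h^{(t)}_{T_j^{(t)}}\rangle|\le c\,\delta_{2k}\norm{h^{(t)}_{T_{01}^{(t)}}}_2\norm{h^{(t)}_{T_j^{(t)}}}_2$, and $\sum_{j\ge 2}\norm{h^{(t)}_{T_j^{(t)}}}_2\le k^{-1/2}\norm{h^{(t)}_{I^c}}_1$, the usual manipulation yields, with constants depending only on $\delta_{2k}$,
\[
\norm{h^{(t)}_{T_{01}^{(t)}}}_2\;\le\;c_1\epsilon+\frac{c_2\,\delta_{2k}}{\sqrt k}\,\norm{h^{(t)}_{I^c}}_1,
\qquad
\norm{h^{(t)}}_2\;\le\;c_1\epsilon+\frac{c_3}{\sqrt k}\,\norm{h^{(t)}_{I^c}}_1 .
\]
The feature that makes the rest go through is that the first estimate carries a factor $\delta_{2k}$ in front of the tail term $k^{-1/2}\norm{h^{(t)}_{I^c}}_1$.

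The crux is the bootstrapping step. Because $h^{(t)}_{I^c}=\sum_{s\le t}e^{(s)}_{I^c}$, every $\norm{h^{(t)}_{I^c}}_1$ is at most $E:=\sum_s\norm{e^{(s)}_{I^c}}_1$, so $\sum_t\norm{h^{(t)}_{I^c}}_1\le TE$; and bounding $\norm{e^{(s)}_I}_1\le\sqrt k\,\norm{e^{(s)}_I}_2\le\sqrt k\,(\norm{h^{(s)}_{T_{01}^{(s)}}}_2+\norm{h^{(s-1)}_{T_{01}^{(s-1)}}}_2)$ and combining with (ii) and the first frame-wise estimate, I obtain a self-referential inequality for $W:=\sum_s\norm{h^{(s)}_{T_{01}^{(s)}}}_2$ of the shape $W\le c_1'T\epsilon+c_2'\,\delta_{2k}\,T\,W+c_3'\,\delta_{2k}\,T\,k^{-1/2}\Sigma$. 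Here $\delta_{2k}<(4T)^{-1}$ is exactly what forces the self-coefficient $c_2'\delta_{2k}T$ strictly below $1$, so $W$ can be solved for with constants depending only on $\delta_{2k}$. Feeding this back through $\sum_t\norm{h^{(t)}_{I^c}}_1\le TE\le T(2\sqrt k\,W+2\Sigma)$ and then through the second frame-wise estimate, and finally dividing by $T$, delivers \eqref{bnd:SupportReconError}; the powers of $T$ are laid out so that only the noise contribution keeps a factor $T$ (emerging as $C'T\epsilon$) while the sparsity term is $T$-free. Inequality \eqref{bnd:ReconError} then follows by applying \eqref{bnd:SupportReconError} with $I$ taken to be the common support of the best $k$-term approximations of $x^{*(0)}$ and of every frame of $\nabla_t x^*$, which makes $\norm{x^{*(0)}_{I^c}}_1\le\sigma_k(x^{*(0)})$ and $\norm{\nabla_t x^*_{\bar I^c}}_1\le\sigma_k(\nabla_t x^*)$.

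I expect the bootstrapping step to be the main obstacle: the difference-domain sensing map (the block-diagonal $A^{(t)}$'s composed with the cumulative sum $x^{(t)}=\sum_{s\le t}d^{(s)}$) has conditioning that grows linearly with $T$, so it is not RIP and no single application of a standard recovery theorem will do. Everything hinges on (a) keeping the $\delta_{2k}$ factor on the tail term of the frame-wise estimate, (b) choosing the RIP constant as small as $O(1/T)$ so that the self-referential inequality closes, and (c) the careful accounting of powers of $T$ that makes them cancel everywhere except in the single $C'T\epsilon$ term. A minor technicality — exactly which restricted-isometry order the near-orthogonality estimate formally requires — is absorbed into the constants by the usual device and does not change the final form of the bound.
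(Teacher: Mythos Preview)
Your proposal is correct and follows essentially the same route as the paper's proof: the same cone condition (your $\sum_t\norm{e^{(t)}_{I^c}}_1\le\sum_t\norm{e^{(t)}_I}_1+2\Sigma$ is exactly the paper's inequality on $\|h^{(0)}_{\bar I^c}\|_1+\|\nabla h_{\bar I^c}\|_1$), the same per-frame RIP estimate with the crucial $\delta_{2k}$ in front of the tail, the same passage from $\sum_t\|h^{(t)}_{I^c}\|_1$ to $T(\|h^{(0)}_{I^c}\|_1+\|\nabla h_{\bar I^c}\|_1)$ (the paper isolates this as a separate lemma, you do it via the triangle inequality on $h^{(t)}=\sum_{s\le t}e^{(s)}$), and the same bootstrap on $W=\sum_t\|h^{(t)}_{I\cup J(t)}\|_2$ closed by $\delta_{2k}<(4T)^{-1}$. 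The only cosmetic differences are that you work in the difference variables $e^{(t)}$ throughout while the paper switches between $h^{(0)},\nabla h$ notation, and you (correctly) carry $2\epsilon$ for the residual where the paper writes $\epsilon$; both are absorbed into constants.
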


Note that the $T$ parameter in Theorem \ref{thm:avgErrDiff} refers to the length of the input measurement sequence. This is significant because in the blocked version of our algorithm (Algorithm \ref{alg:BlockedFISTA-D}), $T = B$ where $B$ is the block length. Thus, the blocked version prevents the second error term from scaling with the total length of the movie, implying that accurate reconstruction can happen for movies of arbitrary length provided the time gradient is sparse. Similar concerns over the scaling of the RIP constant $\delta_{2k}$ with $T$ are alleviated in the blocked algorithm. Finally, the bound of equation \ref{bnd:ReconError} assumes that the time gradient concentrates on a constant set of $k$ pixels - provided our block length $B$ is small enough, this is a mild assumption due to the time continuity of signals. 

The above result allows us to characterize how different properties of our imaging system and signal affect the reconstruction error, and we validate these predictions in Section \ref{sec:ExpValidTheory}. Increasing the number of lines sampled while fixing the sampling frequency will increase the maximum support size $k$ by the discussion in Section \ref{sec:RollingShutterRIP}. This decreases the near-sparsity term $\sigma_k(x^{*(0)}) + \sigma_k(\nabla_t x^*)$ and hence gives a smaller reconstruction error. From our discussion in Section \ref{sec:RollingShutterRIP} and the results of Section \ref{sec:ExpValidTheory}, this amounts to choosing an appropriate integration window of the rolling shutter readout so the number of lines per sample is tuned to the size of the signal's spatial support. Similarly, increasing the sampling frequency or equivalently slowing the signal will shrink the overall magnitude of the time gradient and also give a smaller reconstruction error. Finally, the result predicts that the average reconstruction error should scale linearly with the root power of exogenous noise.

\section{Simulation  Results}\label{sec:Experiments}
We simulated a rolling-shutter imaging system and generated a static PSTE signature with varying intensity. The PSF of our simulated system was recorded from a physical phase diffuser\cite{VidFromStills}. Our simulated rolling shutter sampled 5 lines per sample at $\SI{1000}{\hertz}$ over a $128 \times 128$ FPA, which roughly corresponds to a $\SI{40}{\hertz}$ global shutter rate. The synthetic PSTE consisted of four sequential pulses of frequencies $\SI{15}{\hertz}$, $\SI{50}{\hertz}$, $\SI{100}{\hertz}$, and $\SI{400}{\hertz}$, with a spatial support of roughly 3 pixels in diameter and a duration of $\SI{300}{\milli\second}$. Figure \ref{fig:SimulatedPSTE} shows a cross-section of the PSTE in both time and space. 

\begin{figure} [ht]
    \begin{center}
    \begin{tabular}{c} 
    \includegraphics[width=15cm]{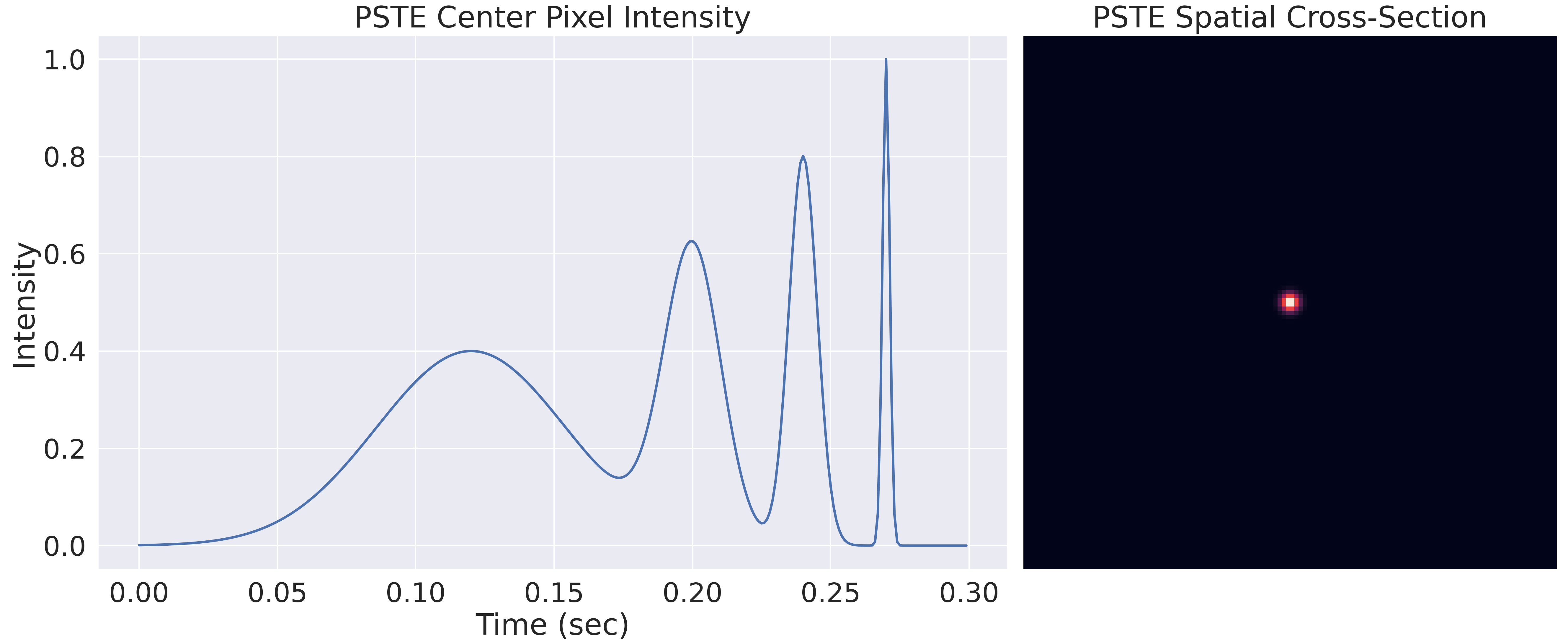}
    \end{tabular}
    \end{center}
    \caption{\label{fig:SimulatedPSTE}
On the left is a 1D signal consisting of four sequential pulses of roughly $\SI{15}{\hertz}$, $\SI{50}{\hertz}$, $\SI{100}{\hertz}$, and $\SI{400}{\hertz}$, with a duration of $\SI{300}{\milli\second}$. The PSTE was generated by multiplying this 1D signal with a $128 \times 128$ Gaussian, shown on the right, to generate a $128 \times 128 \times 300$ movie. Spatially, the PSTE is roughly 3 pixels in diameter. Our simulated rolling shutter reads 5 lines per sample at $\SI{1000}{\hertz}$, corresponding to a global shutter rate of roughly $\SI{40}{\hertz}$.}
\end{figure} 

\subsection{Comparison to Alternative  Reconstruction Algorithms}\label{sec:ComparisonAlgo}
Using the PSTE of Figure \ref{fig:SimulatedPSTE}, we compared the reconstruction quality of our blocked differences algorithm ($B=50$) against two alternative compressed sensing algorithms: one using a TV regularizer and one using a standard $\ell^1$ regularizer (see Appendix \ref{app:AltAlgos} for details). All three algorithms were optimized using FISTA, and they were allowed at most 10000 optimization steps with the same convergence threshold. The stepsizes were calibrated using similar calculations, and the regularization parameter was set to the same value $\lambda = .1$ with slight adjustments for the TV algorithm (see Appendices \ref{app:StepsizeCalc} and \ref{app:AltAlgos}). We compared reconstruction quality by looking at the center pixel where the PSTE is spatially localized, shown in Figure \ref{fig:ComparisonAlgo}. Notably, the reconstructions of the two alternative algorithms suffer from a periodic dropout artifact, which is not noticeably present in the reconstruction of our blocked differences algorithm. This is an artifact of the rolling shutter sampler, and we discuss this in more detail in Section \ref{sec:PractConsid}. In terms of speed, our differences algorithm ($\SI{31.1}{\second}$) was significantly faster than both the $\ell^1$ algorithm ($\SI{64.4}{\second}$) and the TV algorithm ($\SI{190.8}{\second}$). In summary, our algorithm was both faster and gave significantly higher quality reconstructions than the two alternative algorithms in our rolling shutter system.

\begin{figure} [ht]
    \begin{center}
    \begin{tabular}{c} 
    \includegraphics[width = 17cm]{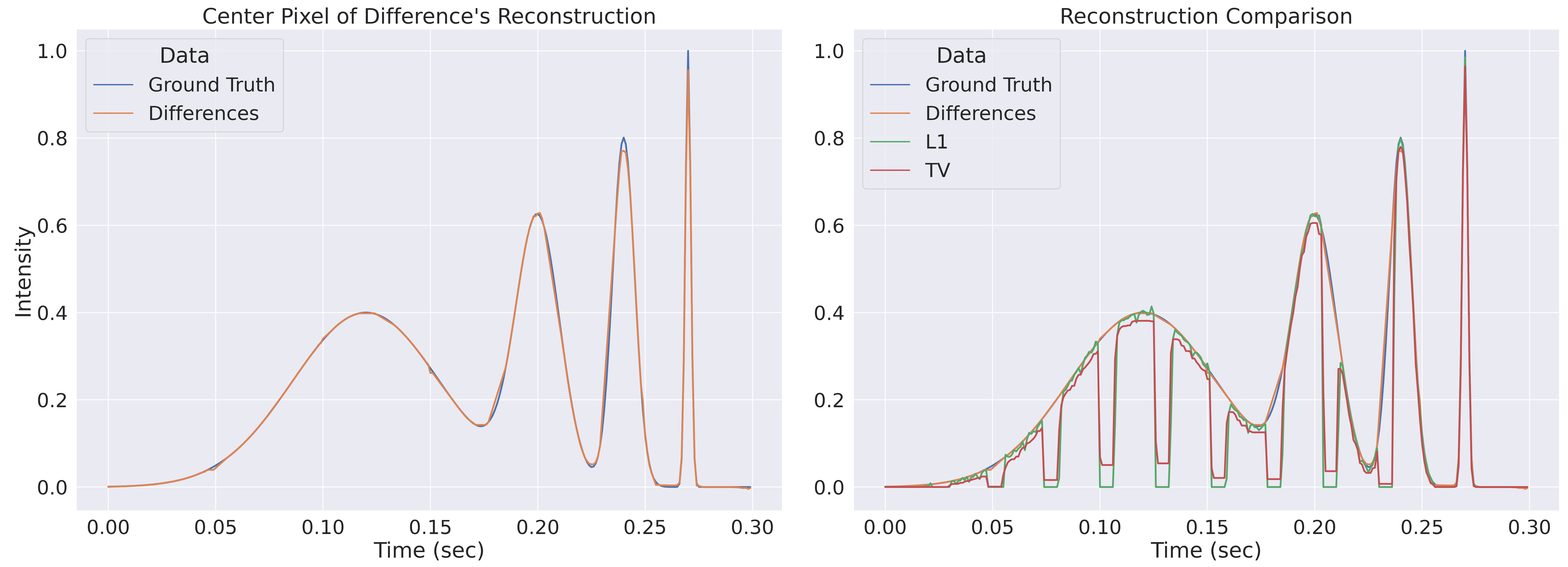}
    \end{tabular}
    \end{center}
    \caption{    \label{fig:ComparisonAlgo}
    Comparison of three compressed sensing algorithms: blocked differences ($B = 50$), a TV variant, and a standard $\ell^1$ variant. We plot the center pixel intensity of each reconstruction, where the PSTE is spatially localized. The two alternative algorithms suffer from a dropout artifact which is not noticeably present in our algorithm. Our algorithm ($\SI{31.1}{\second}$) was significantly faster than both the $\ell^1$ algorithm ($\SI{64.4}{\second}$) and TV algorithm ($\SI{190.8}{\second}$).}
\end{figure} 

\subsection{Validation of Theorem \ref{thm:avgErrDiff} in Simulation}\label{sec:ExpValidTheory}
Using the same simulated PSTE (Figure \ref{fig:SimulatedPSTE}) and optimization hyperparameters as the previous section, we looked at the effects of three sensor characteristics: the number of lines per sample, the sampling rate, and the power of the noise. Figure \ref{fig:TheoryValid} shows the results of our simulations using the blocked differences algorithm ($B=50$).

In the lines experiment, we enlarged the PSTE spatially by a factor of 12 in order to highlight its effect on the maximum feasible support size of a signal. Notably, there is a sharp decline in the error as the number of lines increases from one to seven, with smaller improvements thereafter. We conjecture that this is due to a phase transition in the feasible support size, and similar phenomena have been observed in other compressed sensing settings\cite{DonohoPhaseTrans}. From Theorem \ref{thm:RIPpsf}, the size of the maximum feasible support is roughly proportional to the number of lines sampled, and there is a critical number of lines $m$ where the corresponding maximum feasible support $k(m)$ shrinks below the size of the PSTE. Once this happens, the near-sparsity term of equation \ref{bnd:ReconError} spikes as it starts incorporating parts of the signal, leading to a sharp increase in error as the number of lines decreases.

In the sampling rate experiment, we fixed the number of lines per sample to 5 and only increased the sampling rate. There is small bump in the error as we increase the sampling rate, and we conjecture that this is due to a rolling shutter artifact (see Section \ref{sec:PractConsid}). Nevertheless, increasing the sampling rate generally decreases the average error as predicted.

Finally, in the noise experiment we corrupted each frame with i.i.d mean-zero Gaussian noise. The noise variance was calibrated to satisfy each of the tested signal-to-noise ratios (SNRs). The trend is roughly logarithmic as predicted, accounting for the fact that we measure the SNR in decibels.  

\begin{figure} [ht]
    \begin{center}
    \begin{tabular}{c} 
    \includegraphics[width=17cm]{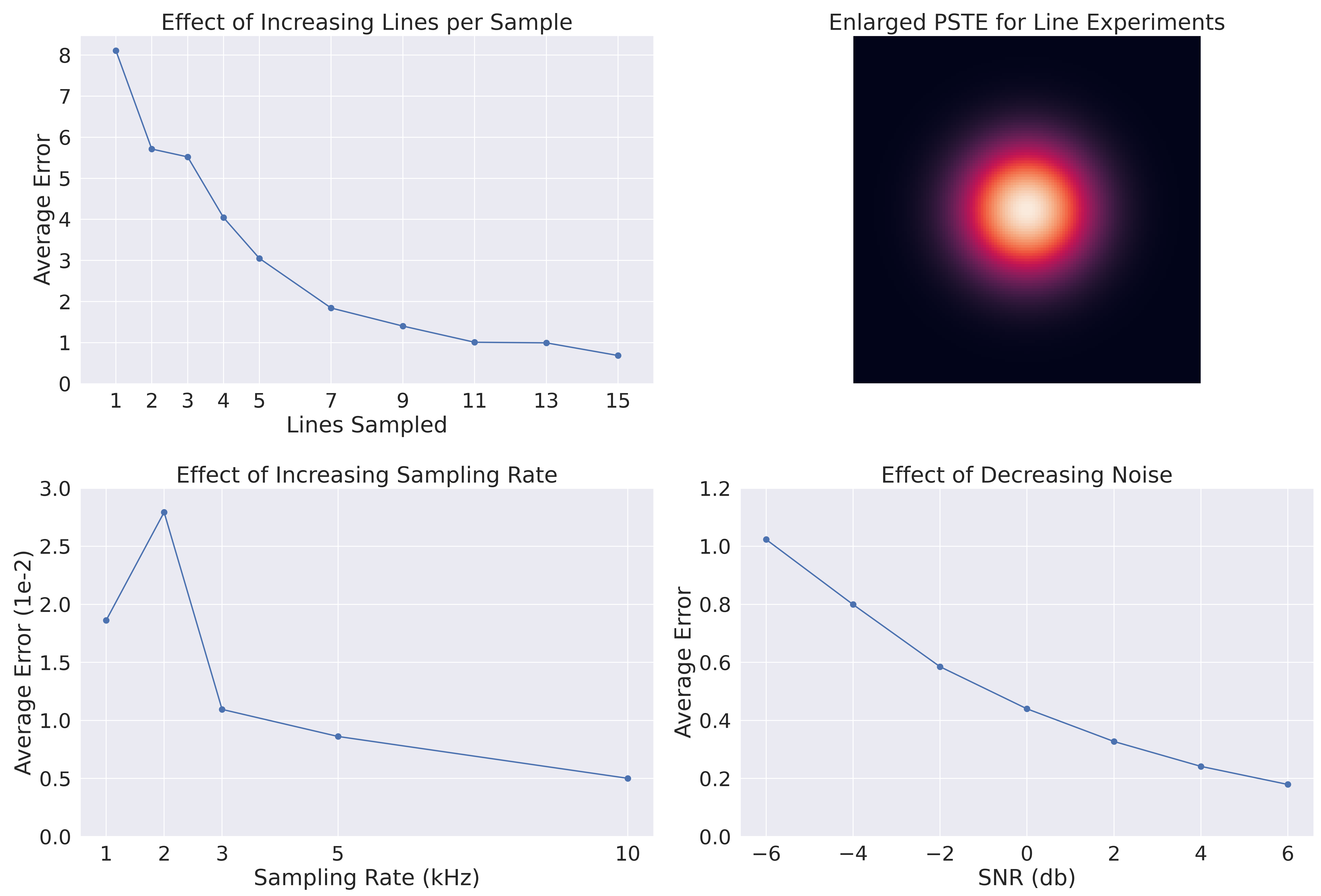}
    \end{tabular}
    \end{center}
    \caption{    \label{fig:TheoryValid}
Simulations demonstrating the predicted effects of each sensor characteristic on the average frame-wise error from Theorem \ref{thm:avgErrDiff}. For the lines experiment, we spatially enlarged the PSTE to demonstrate the relationship between the number of lines and the maximum spatial support of the signal.}
\end{figure} 

\subsection{Reconstruction up to the Nyquist Limit}\label{sec:BlipExperiments}
We test the maximum signal frequency that our blocked differences algorithm can accurately reconstruct from the measurements of the rolling shutter readout. Shown in Figure \ref{fig:BlipVis}, we generated a 1D signal consisting of pulses of varying frequencies occurring at regular intervals. The pulse frequencies linearly sweep from $\SI{40}{\hertz}$ to $\SI{500}{\hertz}$, the Nyquist limit of our simulated $\SI{1000}{\hertz}$ rolling shutter sampler. This was then multiplied by the same Gaussian of Figure \ref{fig:SimulatedPSTE} to generate the full signal.

We ran our blocked differences algorithm ($B = 50$) on this signal and looked at reconstruction error as function of pulse frequency. For each pulse, we computed the average frame-wise error over its temporal support and normalized by its power, as the pulses have varying power. The results of this simulation are shown in Figure \ref{fig:BlipError}. Even at frequencies up to the Nyquist limit of $\SI{500}{\hertz}$, we see our differences algorithm can generate high quality reconstructions from our rolling shutter readout. However, there is also a significant periodic trend in the error. This is an artifact due to spatial coverage  and the rolling shutter, and we discuss this in more detail as well as potential fixes in Section \ref{sec:PractConsid}. While this artifact presents a significant  challenge, these preliminary simulation results are encouraging and show that our differences algorithm is capable of reconstruction up to the Nyquist limit in a rolling shutter system.

\begin{figure} [ht]
    \begin{center}
    \begin{tabular}{c} 
    \includegraphics[width=16cm]{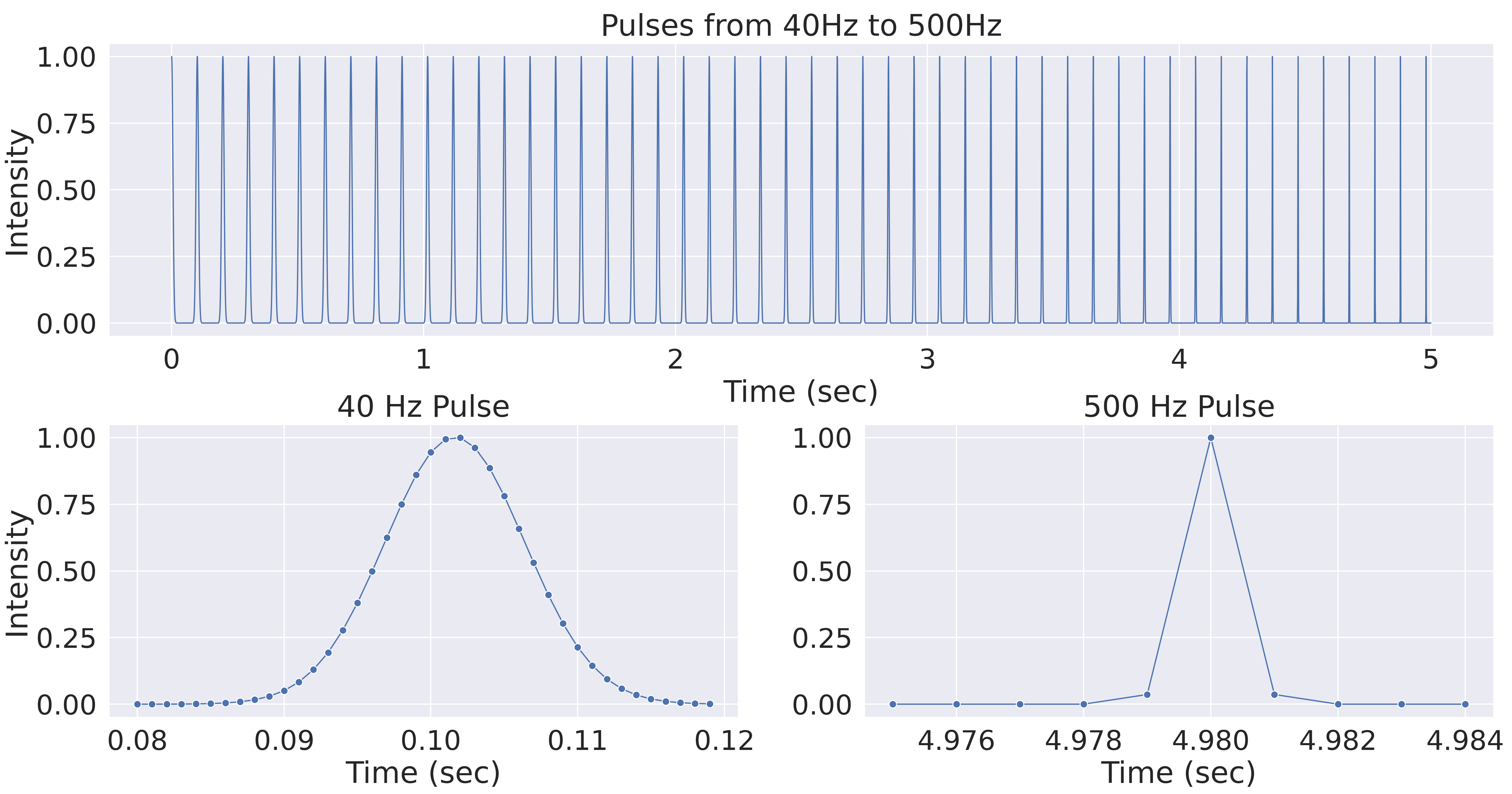}
    \end{tabular}
    \end{center}
    \caption{    \label{fig:BlipVis}
Signal consisting of pulses of varying frequency, linearly sweeping  from $\SI{40}{\hertz}$ to the Nyquist limit of $\SI{500}{\hertz}$. On the bottom, we zoom-in and visualize two pulses at the extreme ends of the signal.}
\end{figure}

\begin{figure} [hb]
    \begin{center}
    \begin{tabular}{c} 
    \includegraphics[width=15cm]{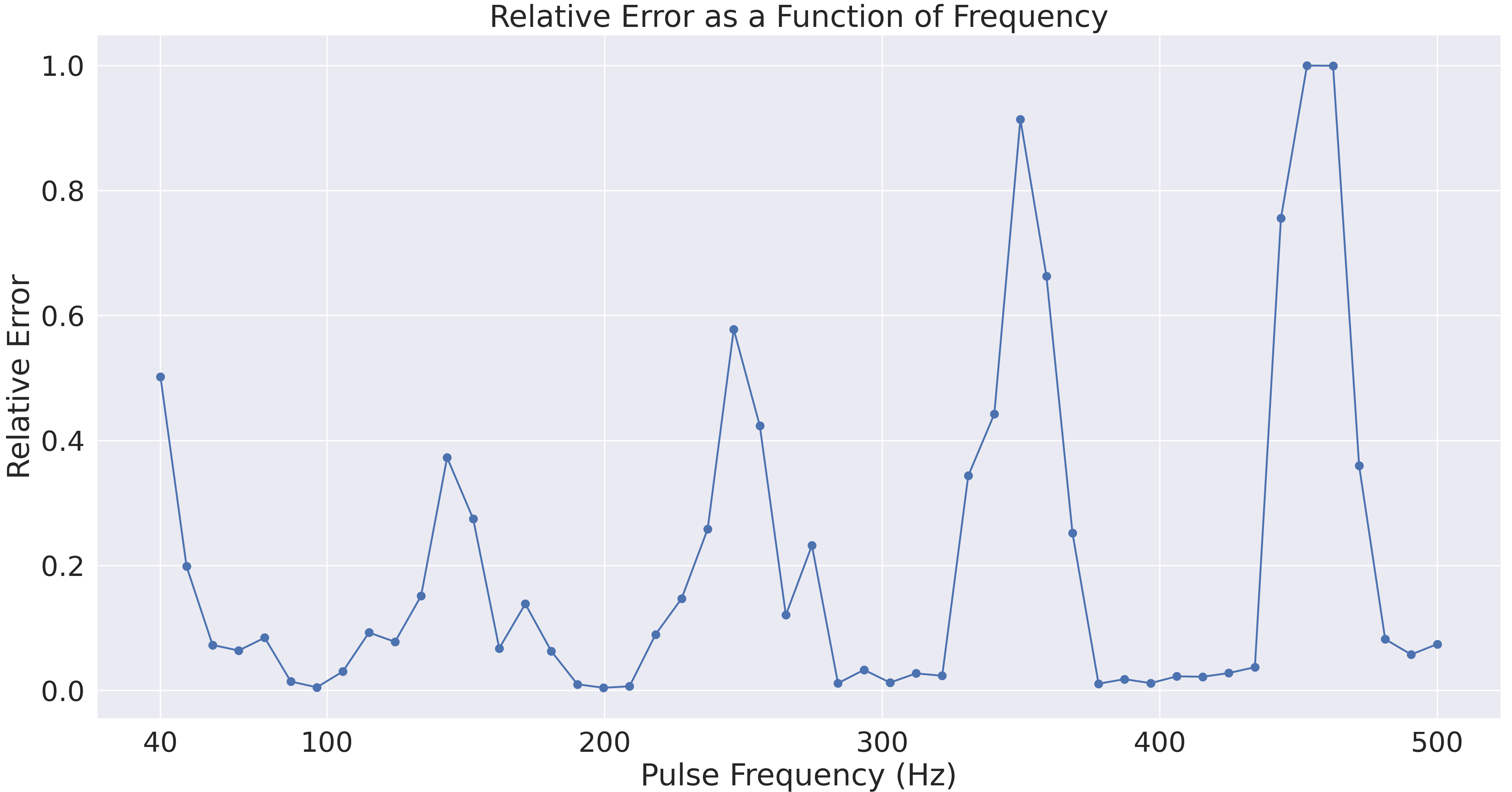}
    \end{tabular}
    \end{center}
    \caption{    \label{fig:BlipError}
Reconstruction error as a function of pulse frequency. We normalized the error by dividing by the pulses' power, as higher frequency pulses have smaller power due to their smaller temporal support.}
\end{figure} 

\clearpage

\section{Practical Considerations}\label{sec:PractConsid}
In Figure \ref{fig:ComparisonAlgo}, we observed a dropout phenomenon in the reconstructions of the  TV and $\ell^1$ algorithms, where the reconstructed signal would periodically go to zero. Similarly, in Figure \ref{fig:BlipError}, we observed a periodic behavior in the reconstruction error. Both of these phenomena can be attributed to aliasing artifacts arising from a lack of sufficient spatial coverage in the rolling shutter sampler. 

\subsection{An Explanation of the Artifact}
Convolution, as defined mathematically, requires a wrap-around behavior: $(\zeta \ast x) (j) = \sum_{j} \zeta(j) x([i-j]_n)$ where $[i-j] \equiv (i -j) \mod n$. However, in a physical imaging system the convolution of a signal with the PSF is cut off at the boundaries of the FPA (we replicate this effect in simulation via padding and cropping operations). Due to this boundary artifact, there is spatial bias in a pixel's location relative to the signal, as a signal tends to spread more power to nearby pixels. Furthermore, the phase diffuser PSF we used in our simulations is spatially limited and does not spread a point source over the entire FPA (see Figure \ref{fig:OpticalSystem}). For PSTEs localized at the center of the FPA, these two factors lead to essentially dead lines of pixels at the top and bottom of the FPA that do not respond to the PSTE. For example, the bottom row of Figure \ref{fig:MeasurementSeq} shows the rolling shutter sampling near the top in the first frame ($t=0$), missing the diffused signal. If the rolling shutter samples these dead lines when the PSTE is spiking, we get no information during that critical spiking window and reconstruction is impossible. 

Hence, the periodic dropout observed  in Figure \ref{fig:ComparisonAlgo} can be explained by the rolling shutter sampling dead lines at the top and bottom of the FPA, inducing an aliasing effect from these uninformative measurements. Similarly, the periodicity of the reconstruction error in Figure \ref{fig:BlipError} can be explained by the signal (Figure \ref{fig:BlipVis}) and rolling shutter coming in and out of phase.  When a pulse is spiking, the rolling shutter samples the central lines if the two are in phase and the dead lines at the edges if the two are out of phase. Since accurate reconstruction depends on good samples during a pulse's spiking window, reconstruction error improves as the sampled lines get closer to the center.

While Theorem \ref{thm:RIPpsf} generally gives a good characterization of our imaging system, it does not account for the boundary artifacts inherent to a physical FPA. This is a fundamental obstacle between theory and practice. Similarly, the phase diffuser PSF we used does not exactly match the requirements of Theorem \ref{thm:RIPpsf}, and physically implementing a PSF that more closely matches these requirements is one of our current areas of research.

\subsection{Increasing Spatial Coverage through a Double Shutter System}
The central problem is a lack of adequate spatial coverage in the rolling shutter when dealing with spatially limited signals like PSTEs. One hardware solution we are investigating is a double rolling shutter system. Such a system samples two lines at a time, with the lines spaced apart by half of the FPA's height. Due to this spacing, if one line is near the edge then the other is near the center, potentially solving the spatial coverage problem.

We compare reconstruction quality under simulated double shutter and single shutter systems with comparable sampling budgets. The single shutter system samples four lines at time, while the double shutter system samples two sets of two lines at a time spaced apart by half the FPA's height. The other parameters of both systems match that of the single shutter system in Section \ref{sec:Experiments}. We again used the PSTE of Figure \ref{fig:SimulatedPSTE} as the input signal. To account for phase effects between the shutter and signal, we shifted the rolling shutter schedule by increments of $\SI{3}{\milli\second}$ and tested reconstruction for each shift using the blocked differences algorithm ($B = 50$). Figure \ref{fig:SglShutter} shows single shutter system suffers from spatial coverage issues, as there is a phase transition in recovery of the fast $\SI{400}{\hertz}$ pulse depending on whether the rolling shutter is near the FPA edges during that pulse. In contrast, Figure \ref{fig:DblShutter} demonstrates that the double shutter system provides sufficient spatial coverage, as reconstruction is stable across all phase offsets between the rolling shutter and the signal. Even though both systems sample the same number of lines at a time, the double shutter system has better spatial coverage because of its line spacing and avoids the artifact.

\begin{figure} [ht]
    \begin{center}
    \begin{tabular}{c} 
    \includegraphics[width=17cm]{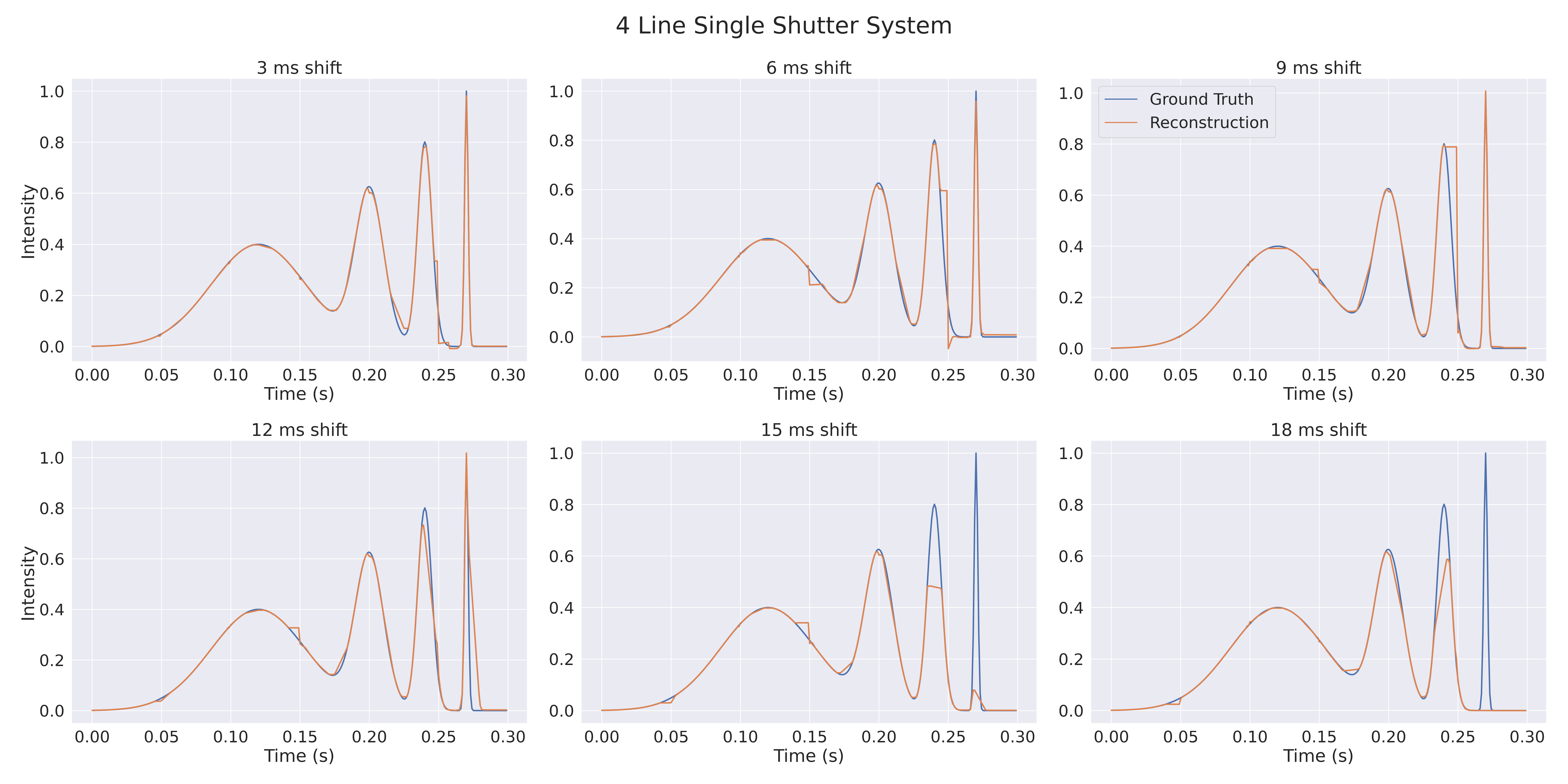}
    \end{tabular}
    \end{center}
    \caption{    \label{fig:SglShutter}
Demonstration of the spatial coverage artifact in a single shutter system sampling 4 lines at $\SI{1000}{\hertz}$. We incremented the rolling shutter schedule to show the phasing effect between the rolling shutter and the signal, where the sampled lines can be closer or farther to the signal's location at the time of spiking. There is a phase transition in recovery of the $\SI{400}{\hertz}$ pulse depending on whether the rolling shutter samples lines at the edge of the FPA.}
\end{figure} 

\begin{figure} [ht]
    \begin{center}
    \begin{tabular}{c} 
    \includegraphics[width=17cm]{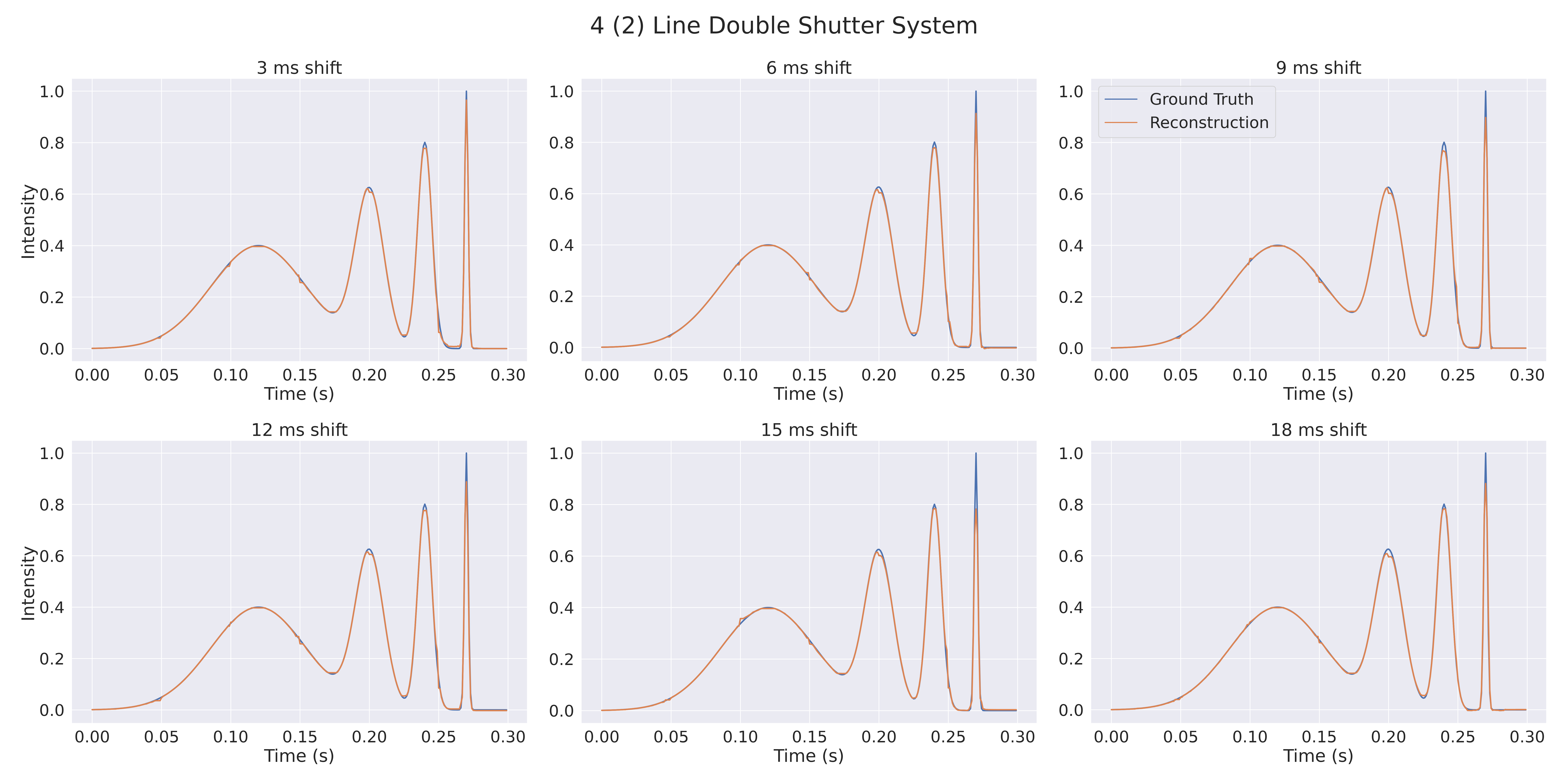}
    \end{tabular}
    \end{center}
    \caption{    \label{fig:DblShutter}
A double shutter system sampling 4 total lines (2 lines per shutter) at the same rate of $\SI{1000}{\hertz}$. Unlike the single shutter system, there is no spatial aliasing artifact. Due to the spacing between the sampled lines, the double shutter system provides sufficient spatial coverage to reliably recover extremely fast and small signals.}
\end{figure} 

\clearpage 

\subsection{Spatial Coverage, Signal Frequency, and Sampling Rate}
In Figure \ref{fig:SglShutter}, we see that the spatial coverage artifact primarily affects the $\SI{400}{\hertz}$ pulse, whereas reconstruction of the slower pulses is mostly stable across all shifts. Again, this artifact occurs if the rolling shutter samples dead lines when a pulses spikes, since we get no information during that critical window. While this window is very brief for fast pulses (around 1-2 samples for a $\SI{400}{\hertz}$ pulse), the spike of a slower pulse occurs over a longer period. If the rolling shutter sampler samples an informative set of lines during this spike, reconstruction can happen, and there are effectively more chances to sample the spike of a slower pulse. Alternatively, the spike of a slower pulse has a larger spatial footprint in the number of informative lines. Thus, one can also solve the spatial coverage artifact by having  a sufficiently fast shutter relative to the signal's maximum frequency. Figure \ref{fig:4xSglShutter} shows the result of increasing the four line single shutter system's sampling rate to $\SI{4000}{\hertz}$. We see that the artifact is largely gone, and reconstruction is stable across all offsets between the shutter and signal.

\begin{figure} [ht]
    \begin{center}
    \begin{tabular}{c} 
    \includegraphics[width=17cm]{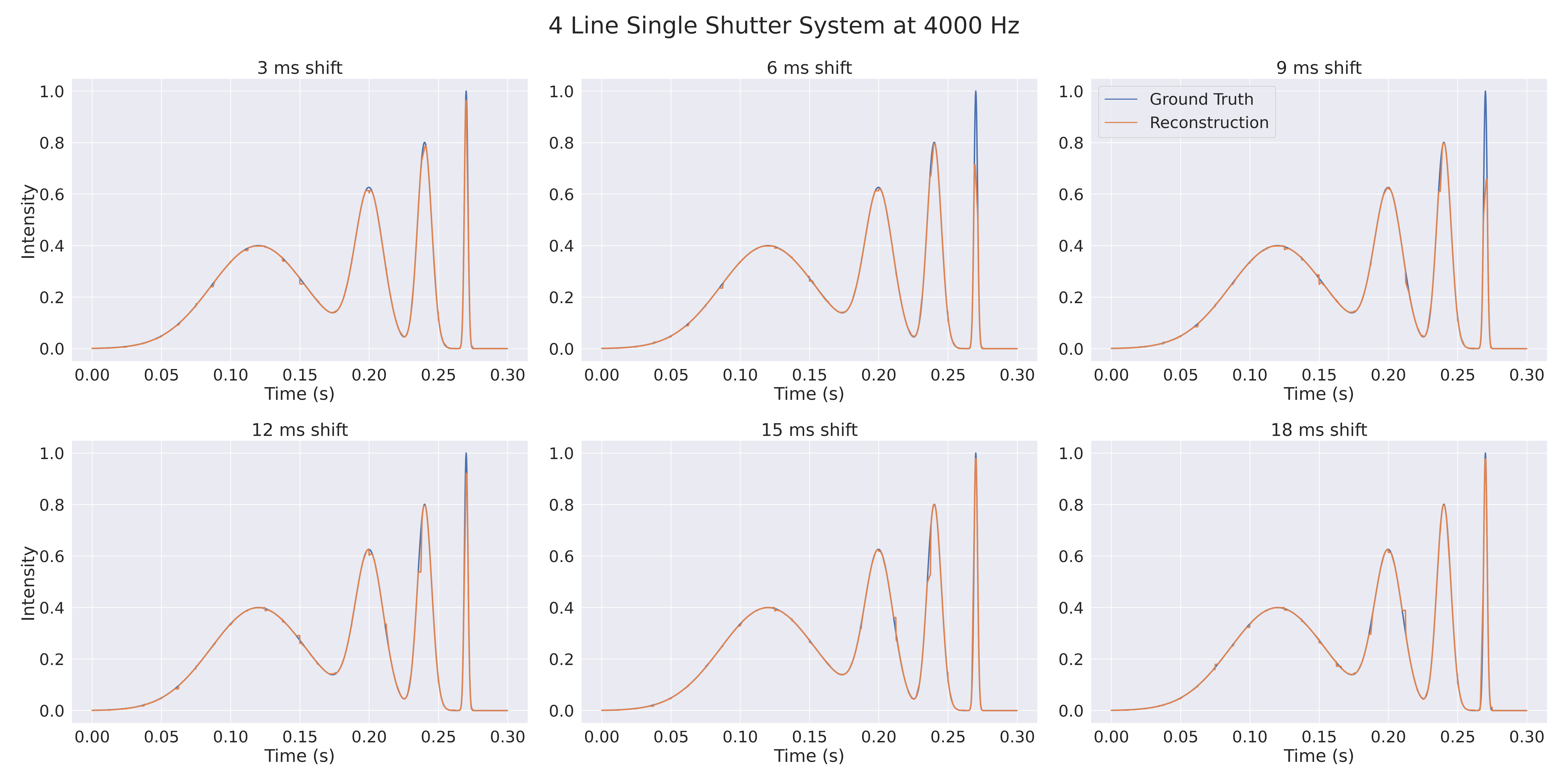}
    \end{tabular}
    \end{center}
    \caption{    \label{fig:4xSglShutter}
We demonstrate the effect of speeding up the singe rolling  shutter's sampling rate from $\SI{1000}{\hertz}$ to $\SI{4000}{\hertz}$. Reconstruction is stable across all offsets of the rolling shutter schedule.}
\end{figure} 

\section{Conclusion}
We have the demonstrated the efficacy of our blocked differences algorithm (Algorithm \ref{alg:BlockedFISTA-D}) in reconstructing PSTEs from the rolling shutter readout of a camera, accurately recovering signals orders of magnitude faster than the native global shutter rate and up to the Nyquist limit of the rolling shutter sampling rate. Compared to alternative TV and $\ell^1$ norm algorithms, our algorithm is both faster and offers superior reconstruction quality (Figure \ref{fig:ComparisonAlgo}). Our theoretical results characterize how certain parameters of our imaging system - the rolling shutter rate, the time integration window, the power of exogenous noise - affect our algorithm's reconstruction error (Section \ref{sec:Theory}). These theoretical results were validated in simulation (Figure \ref{fig:TheoryValid}), and they inform how a physical rolling shutter system should be tuned to accommodate a signal of interest. 

We also identified an aliasing artifact arising from spatial coverage issues in the rolling shutter readout (Section \ref{sec:PractConsid}). While this is an important barrier in a rolling shutter imaging system, we demonstrated the efficacy of two potential hardware solutions: utilizing a double shutter or using a faster camera (Figures \ref{fig:DblShutter} and \ref{fig:4xSglShutter}). Complementing the algorithmic work presented in this paper, we are currently testing the capabilities of a coded-aperture optical system we developed. This system has the potential to implement in hardware PSFs that more closely match the assumptions of Theorem \ref{thm:RIPpsf}, improving reconstruction quality and providing another potential solution to the spatial coverage problem. This system also performs phase retrieval of the PSF, a critical component in our algorithmic pipeline. Finally, it can implement and record various signals, allowing us to validate in hardware the simulation results presented in this paper. Our next step is using this coded aperture system to bridge the gap between the algorithmic work presented in this paper and an implementation in hardware.

\appendix    

\section{Optimization Details}
\label{app:OptDetails}
Recall that our algorithm seeks to recover the vector of frame-by-frame differences $d \in \R^{P \times T}$ of a movie $x \in \R^{P \times T}$ from a sequence of measurements $y \in \R^{M \times T}$. The sensing matrices $A^{(t)} = P_r^{(t)} H$ can be written as the composition of a projection operator $P_r^{(t)}$ representing our row sampler at time $t$ and a circulant matrix $H$ representing convolution with our PSF. After getting the difference estimates $\hat{d}$, we can reconstruct each frame by adding up the differences: $\hat{x}^{(t)} = \sum\limits_{s \leq t} \hat{d}^{(s)}$. To generate the difference estimates, we minimize the objective of equation \ref{eqn:DiffObj}:
\[
\sum_{t = 0}^{T-1} \frac{1}{2}\| A^{(t)}(\sum_{s \leq t} d^{(s)}) - y^{(t)} \|^2_2 + \lambda \| d^{(t)}] \|_1 
\]
Since our optimization problem is a $\ell^2 - \ell^1$ minimization problem, we use the FISTA\cite{FISTA} optimizer which has the general update rule:
\[
x_{m+1} = soft_{\lambda}[x_{m} - \tau \partial x_m] 
\]
where $\tau$ is the stepsize, $\partial x_m$ is the gradient of the $\ell^2$ term with respect to $x_m$, and $soft$ is the soft-thresholding operator.

\subsection{$\ell^2$ Gradient Derivation}\label{app:GradDeriv}
We derive the gradient of the $\ell^2$ term in the objective function. Let $\eta^{(t)} = A^{(t)}[\sum\limits_{s \leq t} d^{(s)} - x^{(t)}]$ denote the error between the measurement under $\hat{x}^{(t)}$ and the observed $y^{(t)}$. Note that each difference only affects the reconstruction for the current and subsequent frames. We compute: 
\begin{align*}
	\pdv{}{d^{(i)}}  \sum_{t = 0}^{T-1} \frac{1}{2}  \| A^{(t)}(\sum_{s \leq t} d^{(s)}) - y^{(t)} \|^2_2 &= \pdv{}{d^{(i)}}  \sum_{t \geq i} \frac{1}{2} \| A^{(t)}(\sum_{s \leq t} d^{(s)}) -y^{(t)} \|^2_2\\
	&= \sum_{t \geq i} A^{(t) T}  [A^{(t)}(\sum_{s \leq t} d^{(s)}) - y^{(t)}] \\
	&=  \sum_{t \geq i}  H^T P_r^{(t) T}  [P_r^{(t)} H (\sum_{s \leq t} d^{(s)}) - y^{(t)}] \\
	&=  \sum_{t \geq i}  H^T[P_r^{(t)} H (\sum_{s \leq t} d^{(s)}) - y^{(t)}] \\
	&=  \sum_{t \geq i} H^T \eta^{(t)}\\
	&= H^T \sum_{t \geq i} \eta^{(t)}
\end{align*}
In the fourth equality, we used the fact that $P_r^{(t)}$ is a projection matrix and hence is both symmetric and idempotent. Thus, for the $i^{th}$ difference $d^{(i)}$ our FISTA update rule:
\[
d^{(i)}_{m+1} = soft_{\lambda}[d^{(i)}_{m} - \tau H^T \sum_{t \geq i} \eta_m^{(t)}] 
\]
From the above equation, our inner gradient step can be expressed as:
\begin{equation}\label{eqn:FISTA_innergrad}
d_m - \tau H^T S^T \eta_m = d_m - \tau H^T S^T (P H S d_m)
\end{equation}
$S$ is an lower triangular matrix of 1's representing a summation operator, since at time $t$ the reconstruction is the sum of all prior differences. $P$ is a projection matrix representing our row sampler. $H$, by slight abuse of notation, is a block-diagonal matrix representing convolving the PSF with the entire sequence of differences $d$.

\subsection{Calibrating the Stepsize}\label{app:StepsizeCalc}
FISTA relies on each inner gradient step being a contraction, so the stepsize $\tau$ needs to be calibrated to ensure this\cite{FISTA}.  From equation \ref{eqn:FISTA_innergrad}, this amounts to requiring that $\tau$ satisfy the inequality:
\[
\tau \| H^T S P H S \|_2 \leq 1 
\]
where $\| \cdot \|_2$ denotes the spectral norm.

The circulant matrix $H$ represents convolution with the PSF and can be diagonalized in the Fourier basis: $H = \mathcal{F}^{-1} \Lambda \mathcal{F}$ with the diagonal of $\Lambda$ coinciding with the Fourier transform of the PSF. Therefore, if we let $\zeta$ denote the PSF, we have $\| H \|_2 = \max_i(\abs{\mathcal{F} \zeta}_i) = \| \mathcal{F} \zeta \|_\infty$. Similarly, we compute the Singular Value Decomposition (SVD) of our summation matrix $S$ and recover the first singular value $\sigma_1(S) = \| S \|_2$. As $P$ is a projection matrix, we have $\| P \|_2 = 1$. Therefore, we set
\[
\tau = (\| \mathcal{F} \zeta \|_\infty \sigma_1(S) r)^{-2}
\]
where $r$ is the number of passes the rolling shutter makes over the FPA. This ensures that our gradient step is a contraction. 

This stepsize is very conservative and uses the loose bound of $\| P H \|_2 \leq \| P \|_2 \| H \|_2$. Empirically, we found that relaxing the scaling of $\tau$ with respect to the summation operator yields faster convergence without sacrificing performance:
\[
\tau_{emp} = \| \mathcal{F} \zeta \|_\infty^{-2} (\sigma_1(S) r)^{-1}
\]

\section{Theoretical Supplement}\label{app:TheoryResults}

\subsection{Proof of Theorem \ref{thm:avgErrDiff}}
\begin{lemma}\label{lem:L1boundDiff}
For $x \in \R^n$, let $\nabla x \in \R^{n-1}$ be the vector of forward differences: $\nabla x (i) = x(i + 1) - x(i)$. We have the bound:
\begin{equation}\label{eqn:L1boundDiff}
    \| x \|_1 \leq n (\| x(0) \|_1 + \| \nabla x \|_1)
\end{equation}
\begin{proof}
Each entry of $x$ can be written as the sum:
\[
|x(i)| = |\sum_{j=0}^{i-1} \nabla x (j) + x(0)| \leq | \sum_{j=0}^{i-1} \nabla x (j) | + |x(0)| \leq \| \nabla x \|_1 + | x(0)|
\]
Summing over entries of $x$ gives the result.
\end{proof}
\end{lemma}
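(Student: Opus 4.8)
The plan is to control each coordinate $|x(i)|$ separately by $|x(0)| + \|\nabla x\|_1$ and then sum the $n$ coordinate bounds. The engine of the argument is the telescoping identity
\[
x(i) = x(0) + \sum_{j=0}^{i-1} \nabla x(j),
\]
valid for every $i$ because consecutive terms $\nabla x(j) = x(j+1) - x(j)$ cancel when summed. This expresses an arbitrary entry of $x$ purely in terms of the boundary value $x(0)$ together with a partial sum of the forward-difference vector, which is exactly the two quantities appearing on the right-hand side of the claimed inequality.

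First I would apply the triangle inequality to the telescoping identity to obtain $|x(i)| \le |x(0)| + \sum_{j=0}^{i-1} |\nabla x(j)|$, and then replace the partial sum by the full $\ell^1$ norm, $\sum_{j=0}^{i-1} |\nabla x(j)| \le \|\nabla x\|_1$, giving the uniform estimate $|x(i)| \le |x(0)| + \|\nabla x\|_1$ for all $i$ (for $i = 0$ the sum is empty and the bound is trivial). Since $x(0)$ is a scalar here, $|x(0)| = \|x(0)\|_1$, so this matches the paper's notation.

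Then I would sum the uniform estimate over $i = 0, \dots, n-1$. Because the right-hand side is independent of $i$, summing simply multiplies it by $n$:
\[
\|x\|_1 = \sum_{i=0}^{n-1} |x(i)| \le \sum_{i=0}^{n-1}\bigl(\|x(0)\|_1 + \|\nabla x\|_1\bigr) = n\bigl(\|x(0)\|_1 + \|\nabla x\|_1\bigr),
\]
which is the desired bound.

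I do not anticipate a genuine obstacle: the proof is a single telescoping step followed by the triangle inequality and a trivial sum. The only point worth emphasizing is that the factor $n$ cannot be improved in general — take $x$ to be a nonzero constant vector, where $\nabla x = 0$ while $\|x\|_1 = n|x(0)|$ — and this is precisely the factor that, when the lemma is later applied to the reconstruction error viewed as a length-$T$ sequence in time, produces the $T$-dependence of the error bound in Theorem \ref{thm:avgErrDiff}.
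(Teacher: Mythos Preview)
Your proof is correct and is essentially identical to the paper's: both use the telescoping identity $x(i) = x(0) + \sum_{j<i}\nabla x(j)$, bound $|x(i)|$ by $|x(0)| + \|\nabla x\|_1$ via the triangle inequality, and then sum over the $n$ coordinates. Your additional remark on the sharpness of the factor $n$ is a nice touch not present in the paper.
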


\begin{proof}
For notational convenience, we drop the subscript on the forward time difference operator $\nabla \equiv \nabla_t $. Similarly, we abuse notation and let $u_{\Bar{I}}$ denote the appropriately subset $u$ for time-slices and movies alike. We first derive a cone condition that characterizes the solution $\hat{x}$. As the ground truth signal $x^*$ is feasible, by optimality of $\hat{x}$ we have:
\begin{equation}\label{calc:cone1}
\|\hat{x}^{(0)}\|_1 + \norm{\nabla \hat{x}}_1 \leq \| x^{*(0)}\|_1 + \norm{\nabla x^*}_1
\end{equation}
Letting $h = \hat{x} - x^*$ denote the error, we calculate:
\begin{align*}
    \| \hat{x}^{(0)}\|_1 + \norm{\nabla \hat{x}}_1 &= \| x^{*(0)} + h^{(0)} \|_1 + \norm{\nabla x^* + \nabla h}_1\\
    &= \| [x^{*(0)} + h^{(0)}]_{\Bar{I}} \|_1 + \| [x^{*(0)} + h^{(0)}]_{\Bar{I}^c} \|_1 + \norm{[\nabla x^* + \nabla h]_{\Bar{I}}}_1 +  \norm{[\nabla x^* + \nabla h]_{\Bar{I}^c}}_1\\
    &\geq \| x^{*(0)}_{\Bar{I}} \| - \| h^{(0)}_{\Bar{I}} \| + \| h^{(0)}_{\Bar{I}^c} \| - \| x^{*(0)}_{\Bar{I}^c} \|+ \norm{\nabla x^*_{\Bar{I}}}_1 - \norm{\nabla h_{\Bar{I}}}_1 + \norm{\nabla h_{\Bar{I}^c}}_1 - \norm{\nabla x^*_{\Bar{I}^c}}_1 \numberthis \label{calc:cone2}
\end{align*}
Combining inequalities \eqref{calc:cone1} and \eqref{calc:cone2} gives the general cone condition:
\begin{equation}\label{eqn:gradConegeneral}
\| h^{(0)}_{\Bar{I}^c} \|_1 + \norm{\nabla h_{\Bar{I}^c}}_1 \leq \| h^{(0)}_{\Bar{I}} \|_1 + \norm{\nabla h_{\Bar{I}}}_1 + 2[\| x^{*(0)}_{\Bar{I}^c} \|_1 + \norm{\nabla x^*_{\Bar{I}^c}}_1]
\end{equation}
If the $k$-sparse approximations have the same support $I$, equation \eqref{eqn:gradConegeneral} can be rewritten:
\begin{equation}\label{eqn:gradCone}
\| h^{(0)}_{\Bar{I}^c} \|_1 + \norm{\nabla h_{\Bar{I}^c}}_1 \leq \| h^{(0)}_{\Bar{I}} \|_1  + \norm{\nabla h_{\Bar{I}}}_1 + 2 [\sigma_k(x^{*(0)}) + \sigma_k(\nabla x^*)]
\end{equation}
For notational convenience we use the cone condition of equation  \eqref{eqn:gradCone}, and the general case follows by replacing it with equation \eqref{eqn:gradConegeneral}.

Let $J(t) \subseteq I^c$ denote the indices of the largest $k$ magnitude entries of $h^{(t)}$ outside of $I$. As our sensing matrices $A^{(t)}$ are uniformly RIP with constant $\delta_{2k}$  and $\| A^{(t)} h^{(t)} \|_2 \leq \epsilon$, we have a standard compressed sensing bound for each frame (Lemma 1.3 \cite{Eldar_Kutyniok_2012}):
\begin{align*}
(1-\delta_{2k})\| h^{(t)}_{\Bar{I} \cup J(t)} \|_2 &\leq \delta_{2k} \sqrt{2} \frac{ \| h^{(t)}_{\Bar{I} ^c} \|_1}{\sqrt{k}} + \frac{| \dprod{Ah_{\Bar{I} \cup J(t)}}{ A h} |}{\| h_{\Bar{I} \cup J(t)} \|_2}\\
&\leq \delta_{2k} \sqrt{2} \frac{ \| h^{(t)}_{\Bar{I} ^c} \|_1}{\sqrt{k}} + \frac{\| A  h_{\Bar{I} \cup J(t)}\|_2 \| A h \|_2}{\| h_{\Bar{I} \cup J(t)} \|_2}\\
&\leq \delta_{2k} \sqrt{2} \frac{ \| h^{(t)}_{\Bar{I} ^c} \|_1}{\sqrt{k}} + (1+ \delta_{2k})^{1/2} \epsilon
\end{align*}
Adding up the frame-wise bounds, an application of Lemma \ref{lem:L1boundDiff} and the cone condition \eqref{eqn:gradCone} gives:
\begin{align*}
    (1-\delta_{2k}) \sum_{t} \| h^{(t)}_{\Bar{I} \cup J(t)} \|_2 &\leq \delta_{2k} \sqrt{2} \frac{ \| h_{\Bar{I}^c} \|_1}{\sqrt{k}} + (1+ \delta_{2k})^{1/2} T \epsilon \\
    &\leq \sqrt{2} T ( \delta_{2k}  \frac{\| h^{(0)}_{\Bar{I}^c} \|_1 + \| \nabla h_{\Bar{I}^c} \|_1}{\sqrt{k}} + \epsilon)\\
    &\leq \sqrt{2} T ( \delta_{2k} \frac{\| h^{(0)}_{\Bar{I}} \|_1 + \| \nabla h_{\Bar{I}} \|_1}{\sqrt{k}} +  \delta_{2k} \frac{2 [\sigma_k(x^{*(0)}) + \sigma_k(\nabla x^*)]}{\sqrt{k}} + \epsilon) \numberthis \label{calc:bound1}
\end{align*}
We now bound the first term:
\begin{align*}
    \frac{ \| h^{(0)}_{\Bar{I}} \|_1  + \norm{\nabla h_{\Bar{I}}}_1}{\sqrt{k}} &= \frac{1}{\sqrt{k}}( \| h^{(0)}_{\Bar{I}} \|_1 + \sum\limits_{t = 1}^{T-1} \| h_{\Bar{I}}^{(t)} - h_{\Bar{I}}^{(t-1)} \|_1) \\
    &\leq \frac{2}{\sqrt{k}}(\sum\limits_{t = 0}^{T-1} \| h_{\Bar{I}}^{(t)} \|_1)\\
    &\leq 2  \sum_{t} \| h^{(t)}_{\Bar{I} \cup J(t)} \|_2
\end{align*}
Plugging this into \eqref{calc:bound1} and multiplying both sides by $(1-\delta_{2k})^{-1} < \sqrt{2}$:
\[
\sum_{t} \| h^{(t)}_{\Bar{I} \cup J(t)} \|_2 \leq  \delta_{2k} 4T [\| \sum_{t} \| h^{(t)}_{\Bar{I} \cup J(t)} \|_2 +  \frac{ \sigma_k(x^{*(0)}) + \sigma_k(\nabla x^*)}{\sqrt{k}}] + 2 T\epsilon
\]
As $\delta_{2k} < (4T)^{-1}$, we simplify to get:
\begin{equation}\label{calc:firstPartBound}
\sum_{t} \| h^{(t)}_{\Bar{I} \cup J(t)} \|_2 \leq C'' (\frac{\sigma_k(x^{*(0)}) + \sigma_k(\nabla x^*)}{\sqrt{k}} + 2 T \epsilon)
\end{equation}
where $C'' = (1 - \delta_{2k} 4T)^{-1}$.  Now, we bound $\sum\limits_{t} \| h^{(t)}_{[\Bar{I} \cup J(t)]^c} \|_2$. We start with a standard bound for each frame (Lemma A.4 \cite{Eldar_Kutyniok_2012}):
\[
\norm{h^{(t)}_{(\Bar{I} \cup J(t))^c}}_2 \leq \frac{\norm{h^{(t)}_{\Bar{I}^c}}_1}{\sqrt{k}} 
\]
Adding up these frame-wise bounds and using the same calculation as above gives:
\begin{align*}
\sum_{t} \| h^{(t)}_{(\Bar{I} \cup J(t))^c} \|_2 &\leq \frac{\norm{h_{\Bar{I}^c}}_1}{\sqrt{k}}\\
&\leq 2 T [\sum_{t} \| h^{(t)}_{\Bar{I} \cup J(t)} \|_2  + \frac{ \sigma_k(x^{*(0)}) + \sigma_k(\nabla x^*)}{\sqrt{k}}] \numberthis \label{calc:secondPartBound}
\end{align*}
Combining inequalities \eqref{calc:firstPartBound} and \eqref{calc:secondPartBound} finishes the result:
\begin{align*}
    \frac{1}{T} \sum_t \| h^{(t)} \|_2 &\leq \frac{1}{T} [\sum_{t} \| h^{(t)}_{\Bar{I} \cup J(t)} \|_2 + \| h^{(t)}_{(\Bar{I} \cup J(t))^c} \|_2] \\
    &\leq \frac{1}{T}[(2T + 1) \sum_{t} \| h^{(t)}_{\Bar{I} \cup J(t)} \|_2  + 2 T \frac{ \sigma_k(x^{*(0)}) + \sigma_k(\nabla x^*)}{\sqrt{k}}]\\
    &\leq C  \frac{ \sigma_k(x^{*(0)}) + \sigma_k(\nabla x^*)}{\sqrt{k}} + C' T \epsilon
\end{align*}
\end{proof}

\section{Comparison Algorithm Details} \label{app:AltAlgos}
In this section, we briefly provide details the alternative CS algorithms used in our benchmarks.

\subsection{Total Variation (TV) Algorithm}
Throughout this paper, we have been treating our movies $x \in \R^{P \times T}$ as 2D arrays by combining the two spatial dimensions: $P = N^2$. However, in reality each frame $x^{(t)}$ is a $N \times N$ array. For this section, we distinguish between the two spatial dimensions and treat all movies  at 3D arrays: $x \in \R^{N \times N \times T}$. For a $3$-dimensional array $x$, its total variation norm  is  the $\ell^1$ norm of the forward gradient in all three directions:
\[
\| x \|_{TV} = \sum_{i=1}^3 \| \nabla_i x \|_1
\]
where $\nabla_i x (v) = x(v + e_i) - x (v)$ with $v$ being an array index and $e_i$ being the $i^{th}$ coordinate vector. Then, the total variation algorithm reconstructs by solving the following objective:
\[
\hat{x}_{TV} = \argmin_x \sum_{t = 0}^{T-1} \frac{1}{2} \| A^{(t)} x^{(t)} - y^{(t)} \|_2^2 + \lambda \| x \|_{TV}
\]
The stepsize $\tau_{TV}$ was calibrated using a similar computation as in Appendix \ref{app:StepsizeCalc}:
\[
\tau_{TV} = \| \mathcal{F} \zeta \|_\infty^{-2}
\]
We also tuned the regularization parameter $\lambda$ to penalize the spatial and time gradients differently, as we found this gave better performance:
\[
\lambda_t = 2 \lambda \qquad \lambda_{xy} = .9 \lambda
\]

\subsection{Standard $\ell^1$ Algorithm}
The standard $\ell^1$ algorithm for a movie sequence is just the usual $\ell^1$ algorithm applied to each frame. Its reconstruction is the solution to the problem:
\[
\hat{x}_{\ell^1} = \argmin_x \sum_{t = 0}^{T-1} \frac{1}{2} \| A^{(t)} x^{(t)} - y^{(t)} \|_2^2 + \lambda \| x^{(t)} \|_1
\]
Note that each frame estimate $x^{(t)}$ doesn't affect other estimates, and jointly solving for all frames at once is the same as separately solving for each frame. We use the same calculation in the TV norm case to calibrate the stepsize:
\[
\tau_{\ell^1} = \| \mathcal{F} \zeta \|_\infty^{-2}
\]

.

\acknowledgments 
The authors would like to Richard B. Lehoucq (Sandia National Laboratories) and Oscar Lopez (Florida Atlantic University) for their helpful discussions about the theoretical aspects of compressed sensing. 

The work in this paper was funded by LDRD grant \#24-0112 of Sandia National Laboratories. Sandia National Laboratories is a multi-mission laboratory managed and operated by National Technology \& Engineering Solutions of Sandia, LLC (NTESS), a wholly owned subsidiary of Honeywell International Inc., for the U.S. Department of Energy’s National Nuclear Security Administration (DOE/NNSA) under contract DE-NA0003525. This written work is authored by an employee of NTESS. The employee, not NTESS, owns the right, title and interest in and to the written work and is responsible for its contents. Any subjective views or opinions that might be expressed in the written work do not necessarily represent the views of the U.S. Government. The publisher acknowledges that the U.S. Government retains a non-exclusive, paid-up, irrevocable, world-wide license to publish or reproduce the published form of this written work or allow others to do so, for U.S. Government purposes. The DOE will provide public access to results of federally sponsored research in accordance with the DOE Public Access Plan.

\bibliography{refs} 

\begin{thebibliography}{10}

\bibitem{CandesWakinIntro}
Candes, E.~J. and Wakin, M.~B., ``An introduction to compressive sampling,'' {\em IEEE Signal Processing Magazine}~{\bf 25}(2),  21--30 (2008).

\bibitem{VidFromStills}
Antipa, N., Oare, P., Bostan, E., Ng, R., and Waller, L., ``Video from stills: Lensless imaging with rolling shutter,'' in [{\em 2019 IEEE International Conference on Computational Photography (ICCP)}{\nolinebreak\hspace{0.1em}]},   1--8 (2019).

\bibitem{DiffuserCamAntipa}
Antipa, N., Kuo, G., Heckel, R., Mildenhall, B., Bostan, E., Ng, R., and Waller, L., ``Diffusercam: lensless single-exposure 3d imaging,'' {\em Optica}~{\bf 5},  1--9 (Jan 2018).

\bibitem{Weinberg100k}
Weinberg, G. and Katz, O., ``100,000 frames-per-second compressive imaging with a conventional rolling-shutter camera by random point-spread-function engineering,'' {\em Opt. Express}~{\bf 28},  30616--30625 (Oct 2020).

\bibitem{BoominathanRecentAdv}
Boominathan, V., Robinson, J.~T., Waller, L., and Veeraraghavan, A., ``Recent advances in lensless imaging,'' {\em Optica}~{\bf 9},  1--16 (Jan 2022).

\bibitem{WillettDiff}
Marcia, R.~F. and Willett, R.~M., ``Compressive coded aperture superresolution image reconstruction,'' in [{\em 2008 IEEE International Conference on Acoustics, Speech and Signal Processing}{\nolinebreak\hspace{0.1em}]},   833--836 (2008).

\bibitem{FISTA}
Beck, A. and Teboulle, M., ``A fast iterative shrinkage-thresholding algorithm with application to wavelet-based image deblurring,'' in [{\em 2009 IEEE International Conference on Acoustics, Speech and Signal Processing}{\nolinebreak\hspace{0.1em}]},   693--696 (2009).

\bibitem{CandesRIP}
Candès, E.~J., ``The restricted isometry property and its implications for compressed sensing,'' {\em Comptes Rendus Mathematique}~{\bf 346}(9),  589--592 (2008).

\bibitem{krahmer2013suprema}
Krahmer, F., Mendelson, S., and Rauhut, H., ``Suprema of chaos processes and the restricted isometry property,'' {\em Communications on Pure and Applied Mathematics}~{\bf 67}(11),  1877--1904 (2014).

\bibitem{CandesTao}
Candès, E.~J., Romberg, J.~K., and Tao, T., ``Stable signal recovery from incomplete and inaccurate measurements,'' {\em Communications on Pure and Applied Mathematics}~{\bf 59}(8),  1207--1223 (2006).

\bibitem{KrahmerTvnorm}
Krahmer, F., Kruschel, C., and Sandbichler, M.,  [{\em Total Variation Minimization in Compressed Sensing}{\nolinebreak\hspace{0.1em}]},  333--358, Springer International Publishing, Cham (2017).

\bibitem{NeedellWard}
Needell, D. and Ward, R., ``Stable image reconstruction using total variation minimization,'' {\em SIAM Journal on Imaging Sciences}~{\bf 6}(2),  1035--1058 (2013).

\bibitem{NeedellWard2}
Needell, D. and Ward, R., ``Near-optimal compressed sensing guarantees for total variation minimization,'' {\em IEEE Transactions on Image Processing}~{\bf 22}(10),  3941--3949 (2013).

\bibitem{PoonFourier}
Poon, C., ``On the role of total variation in compressed sensing,'' {\em SIAM Journal on Imaging Sciences}~{\bf 8}(1),  682--720 (2015).

\bibitem{BoydOpt}
Boyd, S. and Vandenberghe, L.,  [{\em Convex Optimization}{\nolinebreak\hspace{0.1em}]}, {Cambridge University Press} (March 2004).

\bibitem{DonohoPhaseTrans}
Donoho, D.~L., Johnstone, I., and Montanari, A., ``Accurate prediction of phase transitions in compressed sensing via a connection to minimax denoising,'' {\em IEEE Transactions on Information Theory}~{\bf 59}(6),  3396--3433 (2013).

\bibitem{Eldar_Kutyniok_2012}
Davenport, M.~A., Duarte, M.~F., Eldar, Y.~C., and Kutyniok, G.,  [{\em Compressed Sensing: Theory and Applications}{\nolinebreak\hspace{0.1em}]},  1–64, Cambridge University Press (2012).

\end{thebibliography}
\bibliographystyle{spiebib} 

\end{document}